    \let\Cref\crtCref
    \let\cref\crtcref
\newcommand{\RAPP}{{\hyperref[alg:inexactResolvent]{RAPP}}\xspace}
\newtheorem{definition}[thm]{Definition}
\newtheorem{assumption}[thm]{Assumption}
\newtheorem{example}[thm]{Example}
\crefname{thm}{theorem}{theorems}
\crefname{assumption}{assumption}{assumptions}
\crefname{cor}{corollary}{corollaries}
\crefname{prop}{proposition}{propositions}
\crefname{lemma}{lemma}{lemmas}
\newmdtheoremenv{algo}{Algorithm}
\newmdtheoremenv{procedure}{Decision process}
\theoremstyle{remark}
\newtheorem{remark}[thm]{Remark}
\newlist{assnum}{enumerate}{1} %
\setlist[assnum]{label=(\roman*), ref=\theassumption(\roman*)}
\newlist{lemnum}{enumerate}{1} %
\setlist[lemnum]{label=(\roman*), ref=\thelemma(\roman*)}
\newlist{thmnum}{enumerate}{1} %
\setlist[thmnum]{label=(\roman*), ref=\thethm(\roman*)}
\newlist{cornum}{enumerate}{1} %
\setlist[cornum]{label=(\roman*), ref=\thecor(\roman*)}
\newlist{definitionnum}{enumerate}{1} %
\setlist[definitionnum]{label=(\roman*), ref=\thedefinition(\roman*)}
\newcommand\numberthis{\addtocounter{equation}{1}\tag{\theequation}}		%
\newcommand\dueto[1]{\text{\footnotesize#1}}
\newcommand{\z}{\bar{z}}
\newcommand{\R}{\mathbb{R}}
\newcommand{\dom}{\text{dom}}
\newcommand{\proj}{\operatorname{proj}}
\newcommand{\set}[1]{\left\{#1\right\}}
\newenvironment{defbox}{}{}
\newenvironment{asmbox}{}{}
\newenvironment{thmbox}{}{}
\newcommand{\id}{\operatorname{id}}
\newcommand{\zer}{\operatorname{zer}}
\newcommand{\fix}{\operatorname{fix}}
\newcommand{\graph}{\operatorname{grph}}
\newcommand{\dist}{\operatorname{dist}}
\newcommandx{\QC}[2][1={},2={}]{\ifstrempty{#1}{Q#2}{Q_{#1}\ifstrempty{#2}{}{(#2)}}}
\newcommandx{\PC}[2][1={},2={}]{\ifstrempty{#1}{P#2}{P_{#1}\ifstrempty{#2}{}{(#2)}}}
\newcommandx{\HC}[2][1={},2={}]{\ifstrempty{#1}{H#2}{H_{#1}\ifstrempty{#2}{}{(#2)}}}
\newcommandx{\MC}[2][1={},2={}]{\ifstrempty{#1}{M#2}{M\ifstrempty{#2}{}{(#2)}}}
\newcommandx{\EF}[2][1={k},2={}]{\mathbb E\ifstrempty{#1}{}{_{#1}}#2}
\newcommand{\algfont}[1]{\textbf{#1}}
\newcommand{\STAB}[1]{\begin{tabular}{@{}c@{}}#1\end{tabular}}
\newcommand{\cellhl}{\cellcolor{SkyBlue!40}}
\newcommand{\xmark}{\ding{55}}
\newcommand*\ruleline[1]{\par\noindent\raisebox{.8ex}{\makebox[\linewidth]{\hrulefill\hspace{1ex}\raisebox{-.8ex}{#1}\hspace{1ex}\hrulefill}}}
\title{Stable Nonconvex-Nonconcave Training \\ via Linear Interpolation}
\author{%
  Thomas Pethick \\
  EPFL (LIONS) \\
  \texttt{thomas.pethick@epfl.ch}
  \And 
  Wanyun Xie \\
  EPFL (LIONS) \\
  \texttt{wanyun.xie@epfl.ch}
  \And
  Volkan Cevher \\
  EPFL (LIONS) \\
  \texttt{volkan.cevher@epfl.ch}
}
\begin{document}

\maketitle
\begin{abstract}
This paper presents a theoretical analysis of linear interpolation as a principled method for stabilizing (large-scale) neural network training. We argue that instabilities in the optimization process are often caused by the nonmonotonicity of the loss landscape and show how linear interpolation can help by leveraging the theory of nonexpansive operators. We construct a new optimization scheme called relaxed approximate proximal point (RAPP), which is the first explicit method without anchoring to achieve last iterate convergence rates for $\rho$-comonotone problems while only requiring $\rho > -\tfrac{1}{2L}$.\footnote{The range of RAPP can be extended to $\rho > -\tfrac{1}{L}$ by using the weaker notion of \emph{conic} nonexpansiveness as first exploited in \citet{alacaoglu2024extending} to derive first-order complexity results (cf. \Cref{app:conic}).} The construction extends to constrained and regularized settings. By replacing the inner optimizer in RAPP we rediscover the family of Lookahead algorithms for which we establish convergence in cohypomonotone problems even when the base optimizer is taken to be gradient descent ascent. The range of cohypomonotone problems in which Lookahead converges is further expanded by exploiting that Lookahead inherits the properties of the base optimizer. We corroborate the results with experiments on generative adversarial networks which demonstrates the benefits of the linear interpolation present in both RAPP and Lookahead.

\end{abstract}

\etocdepthtag.toc{mtchapter}
\etocsettagdepth{mtchapter}{subsection}
\etocsettagdepth{mtappendix}{none}

\begin{toappendix}
\section{Additional related work}\label{app:relatework}
\paragraph{Stochastic feedback}
  There are several ways in which a stochastic variant of PP can be devised.
  \emph{Incremental proximal methods} were pioneered for convex minimization in \citep{bertsekas2011incremental}, which uses an implicit update conditioned on the current randomness. 
  Related approaches include \citet{patrascu2017nonasymptotic,bianchi2015convergence,patrascu2021stochastic,toulis2016towards}.
  Alternatively, \citep{toulis2015proximal} assumes noisy access to the \emph{full batch} implicit update in what they call the \emph{proximal Robbins-Monro precedure}, which is similar to the approach taken in \citet{bravo2022stochastic} concerning Krasnoselskii-Mann iterations.
  \Citet{toulis2015proximal} explicitly approximate the implicit update in the \emph{proximal stochastic fixed-point} algorithm which is closely related to the approximation in \Cref{sec:onestep}. %
  In the cohypomonotone case it is common to rely on increasing batchsizes (see e.g. \citep[Thm. 4.5]{diakonikolas2021efficient} and \citep[Thm. 6.1]{lee2021fast}) similarly to \Cref{cor:inexactResolvent:stoc}. %
  Very recently, \citep{anonymous2023solving} showed that convergence in stochastic weak MVI (and thus cohypomonotone problems) is possible for an extragradient-type scheme if the Lipschitz conditions are further tightened to a mean-squared smoothness assumption on the stochastic oracles.

\paragraph{Halpern-type}
Halpern iteration introduced in \citet{halpern1967fixed}, in contrast with \ref{eq:IKM}, linearly interpolates with the initial point using a time-varying stepsize, i.e. $z^{k+1} = (1-\lambda_k)z^0 - \lambda_k Tz^k$.
A $\mathcal O(1/k^2)$ convergence rate for the squared fixed point residual was shown in \citet{lieder2021convergence} for nonexpansive operators.
By directly approximating the Halpern iteration, an explicit scheme for monotone problems was later proposed in \cite{diakonikolas2020halpern}, but it suffered a logarithmic factor in the rate.
The logarithmic factor was later removed by means of an extragradient variant \citep{yoon2021accelerated}.
The scheme was extended to unconstrained cohypomonotone problems in \citet{lee2021fast} and subsequently the constrained case in \citet{cai2022accelerated} while only requiring a single projection.

For a detailed discussion on how Halpern-type methods are not 1-SCLI algorithms see \citet[Appendix E.2]{yoon2021accelerated}, which specifically addresses the anchored extragradient method.
The extragradient method, on the other hand, can be written as an 1-SCLI algorithm (cf. \citet[Def. 5]{golowich2020last} and the subsequent discussion).
This argument extends to the multistep extragradient construction used in \RAPP.

\end{toappendix}

\begin{toappendix}
\section{Preliminaries}\label{app:preliminaries}

\begin{figure*}[t]
\begin{adjustbox}{center}
\begin{tabular}{c}
$\begin{tikzcd}
	{} & \text{Lookahead (\Cref{thm:LA:nonexpansive})} & \text{\ref{eq:lookahead} (\Cref{thm:LA:cocoercive})} & {\tfrac 12 \operatorname{GD} + \tfrac 12 \operatorname{EG+}} \text{ (\Cref{thm:LA:k2})} \\
	\text{\ref{eq:IKM}} & {\text{\ref{eq:EG+} (\Cref{thm:EG+})}} & {\text{\ref{eq:LA-EG} (\Cref{thm:FBF})}} & \text{\ref{eq:EG+} (\Cref{thm:EG+})} \\
	& \text{\RAPP (\Cref{cor:inexactResolvent})} & {\text{\ref{eq:LA-CEG+} (\Cref{thm:LA:CEG+})}} &
	\arrow["{\widetilde{T}_k=\operatorname{EG}}"', from=2-1, to=2-2]
	\arrow["{\text{$\widetilde{T}_k$ is a solver for \eqref{eq:ppm0}}}"', from=2-1, to=3-2]
	\arrow["\text{$\widetilde{T}_k$ is an iterative solver}", from=2-1, to=1-2]
	\arrow[from=1-2, to=3-3]
	\arrow[from=1-2, to=2-3]
	\arrow[from=1-2, to=1-3]
	\arrow["{\tau=2}", from=3-2, to=2-2]
	\arrow["{\tau=1}", from=2-3, to=2-4]
	\arrow["{\tau=2}", from=1-3, to=1-4]
\end{tikzcd}$
\end{tabular}
\end{adjustbox}
\caption{Overview of results and relationship between methods.}
\label{fig:overview}
\end{figure*}

The distance from $z \in \R^d$ to a set $\mathcal Z \subseteq \R^d$ is defined as $\dist(z,\mathcal Z) := \min_{z' \in \mathcal Z} \|z-z'\|$.
The normal cone is defined as ${\mathcal N}_{\mathcal Z}(z) := \set{v \mid \braket{v, z' - z} \leq 0 \quad \forall z' \in \mathcal Z}$
and the projection as $\boldsymbol{\Pi}_{\mathcal{Z}}(z) := \min_{w \in \mathcal{Z}} \|z-w\|^2$.
We will denote the natural filtration up to iteration $k$ as $\mathcal F_k$ and use $\EF[k][[\cdot]]=\mathbb E[\cdot \mid \mathcal F_{k}]$.

We restate here some common definitions from monotone and nonexpansive operator for convenience (for further details see
\citet{Bauschke2017Convex}).
An operator $A:\R^d\rightrightarrows\R^n$ maps each point $z\in\R^d$ to a subset $Az \subseteq \R^n$, where the notation $A(z)$ and $Az$ will be used interchangably. 
We denote the domain of $A$ by 
$\dom A:=\{z\in\R^d\mid Az\neq\emptyset\},$
its graph by 
$\graph A:=\{(z,v)\in\R^d\times \R^n\mid v\in Az\}.$
The inverse of $A$ is defined through its graph, $\graph A^{-1}:=\{(v,z)\mid (z,v)\in\graph A\}$ and
the set of its zeros by $\zer A:=\{z\in\R^d \mid 0\in Az\}$.
The set of fixed points is defined as $\fix T:=\{z\in\R^d \mid z\in Tz\}$ for the operator $T\colon \R^d \rightrightarrows \R^d$.

\begin{definition}
A single-valued operator $T\colon \R^d \rightarrow \R^d$ is said to be 
\begin{definitionnum}
\item nonexpansive if $\|Tz - Tz'\| \leq \|z-z'\| \quad \forall z,z' \in \R^d$.
\item quasi-nonexpansive if $\|Tz - z^\star\| \leq \|z-z^\star\| \quad \forall z \in \R^d\text{ and } \forall z^\star \in \fix T$.
\item firmly nonexpansive if $\|Tz - Tz'\|^2 \leq \|z-z'\|^2 - \|(z-z')-(Tz - Tz')\|^2 \quad \forall z,z' \in \R^d$.
\end{definitionnum}
\end{definition}
The resolvent operator  $J_A:=(\id + A)^{-1}$ is firmly nonexpansive (with $\dom J_A= \R^d$) iff $A$ is maximally monotone.
\begin{definition}[(co)monotonicity \cite{bauschke2021generalized}] 
An operator $A:\R^d\rightrightarrows\R^d$ is called monotone if,
 \[
 \langle v-v^\prime, z-z^\prime\rangle \geq 0 \quad \forall (z,v),(z^\prime,v^\prime)\in\graph A,
 \]
 and the operator $A$ is called $\rho$-comonotone (also referred to as $|\rho|$-cohypomonotonicity when $\rho<0$) if
\[
 \langle v-v^\prime, z-z^\prime\rangle \geq \rho\|v-v^\prime\|^2 \quad \forall (z,v),(z^\prime,v^\prime)\in\graph A.
 \]
The operator $A$ is \emph{maximally} (co)monotone if no other (co)monotone operator $B$ exists for which $\graph A \subset \graph B$.
\end{definition}
\begin{definition}[Lipschitz continuity and cocoercivity]
    Let $\mathcal D\subseteq \R^d$ be a nonempty set. A single-valued operator $A:\mathcal D\to \R^n$ is said to be $L$-Lipschitz continuous if for any $z,z^\prime\in \mathcal D$
	\[
	\|Az - Az^\prime\| \leq L\|z-z^\prime\|,
	\]
	and $\beta$-cocoercive if 
	\[
		\langle z-z^\prime, Az - Az^\prime \rangle \geq \beta\|Az - Az^\prime\|^2. 
	\]
\end{definition}

The forward step $H=\id - \gamma F$ is $\nicefrac 12$-cocoercive when $F$ is Lipschitz continuity and $\gamma$ is sufficiently small.

\begin{lemma}[{\citet[Lm. A.3(i)]{pethick2022escaping}}]\label{lm:Main:H:properties}
Suppose \Cref{ass:F:Lips} holds and $\gamma \leq \nicefrac 1L$. Then, 
the mapping \(\HC[] = \id - \gamma F\) is \(\nicefrac12\)-cocoercive for all $u \in \R^d$. Specifically,
\begin{equation}
\langle \HC[][z']-\HC[][z],z'-z\rangle \geq 
  \tfrac12\|\HC[][z']-\HC[][z]\|^2 
  + \tfrac12 (1-\gamma^2L^2) \|z'-z\|^2
  \quad \forall z,z'\in\R^d.
\end{equation}
\end{lemma}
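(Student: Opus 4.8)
The plan is to expand both sides of the claimed inequality in terms of the two difference vectors and reduce the whole thing to the Lipschitz bound on $F$. Concretely, I would set $\delta := z'-z$ and $\eta := Fz' - Fz$, so that $\HC[][z'] - \HC[][z] = \delta - \gamma\eta$. Then the left-hand side is $\braket{\delta - \gamma\eta, \delta} = \norm{\delta}^2 - \gamma\braket{\eta,\delta}$, while the squared norm on the right is $\norm{\HC[][z']-\HC[][z]}^2 = \norm{\delta}^2 - 2\gamma\braket{\delta,\eta} + \gamma^2\norm{\eta}^2$. These two expansions are the only nontrivial inputs, and both are immediate.

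Substituting them into the target inequality, the cross terms $-\gamma\braket{\delta,\eta}$ appearing on each side cancel exactly. After collecting the coefficients of $\norm{\delta}^2$ (note $1 - \tfrac12 - \tfrac12(1-\gamma^2L^2) = \tfrac12\gamma^2L^2$), the claim collapses to $\tfrac12\gamma^2 L^2\norm{\delta}^2 \geq \tfrac12\gamma^2\norm{\eta}^2$, i.e. $\norm{Fz'-Fz} \le L\norm{z'-z}$. This is precisely the $L$-Lipschitz continuity of $F$ granted by \Cref{ass:F:Lips}, applied to the pair $z,z'$, so the inequality follows. The entire argument is therefore a one-line algebraic reduction once the expansions are written down.

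I expect no genuine obstacle here beyond careful bookkeeping of the cross terms and the $\norm{\delta}^2$ coefficients. The one point worth flagging is the precise role of the hypothesis $\gamma \le \nicefrac1L$: the displayed inequality itself in fact holds for \emph{every} $\gamma>0$, since it merely re-encodes Lipschitzness of $F$. The condition $\gamma \le \nicefrac1L$ is needed only to ensure that the coefficient $1-\gamma^2L^2$ is nonnegative, which is what lets one discard that last term and read off the statement that $\HC[]$ is $\nicefrac12$-cocoercive. I would keep this distinction explicit in the write-up so the reader sees that the stronger (coefficient-carrying) form of the estimate is what is actually being proved, with $\nicefrac12$-cocoercivity as its immediate corollary under $\gamma \le \nicefrac1L$.
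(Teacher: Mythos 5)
Your proposal is correct and is essentially the paper's own argument: the paper applies the polarization identity to $\langle \HC[][z']-\HC[][z],\,z'-z\rangle$ to get $\tfrac12\|\HC[][z']-\HC[][z]\|^2+\tfrac12\|z'-z\|^2-\tfrac{\gamma^2}{2}\|Fz-Fz'\|^2$ and then invokes Lipschitzness, which is exactly your expand-and-cancel reduction to $\|Fz'-Fz\|\le L\|z'-z\|$ written in different bookkeeping. Your closing remark that $\gamma\le\nicefrac1L$ is only needed to make the coefficient $1-\gamma^2L^2$ nonnegative (so that cocoercivity can be read off) is accurate and consistent with how the lemma is stated and used.
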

\begin{proof}
By expanding,
\begin{align*}
    \HC[][z] - \HC[][z'] 
        {}={}
    (z -z') - \gamma(Fz - Fz').
    \numberthis\label{eq:Main:Hdif:1}
\end{align*}
Using \eqref{eq:Main:Hdif:1} we get,  
\begin{align*}
    \langle \HC[][z']-\HC[][z],z'-z\rangle
        {}={}&
    \langle \HC[][z']-\HC[][z],\HC[][z']-\HC[][z] - \gamma(Fz - Fz')\rangle
    \\
    \dueto{\eqref{eq:Main:Hdif:1}}    
        {}={}&
    \tfrac12\|\HC[][z']-\HC[][z]\|^2
        {}+{}
    \tfrac12 \|z'-z\|^2 
        {}-{}
    \tfrac{\gamma^2}{2} \|Fz - Fz'\|^2
    \\
    \dueto{(\Cref{ass:F:Lips})}
        {}\geq{}&
    \tfrac12\|\HC[][z']-\HC[][z]\|^2 + \tfrac12 (1-\gamma^2L^2) \|z'-z\|^2
    \numberthis\label{eq:Main:AFBA:cocoercive}
\end{align*}
This completes the proof.
\end{proof}

\subsection{Relationship between weak Minty variational inequilities and cohypomonotonicity}\label{app:weakMVI-comonotone}

Let $F\colon \R^d\rightarrow\R^d$ be a single-valued operator.
In the unconstrained case, the weak Minty variational inequality (MVI) with parameter $\rho \in (-\nicefrac{1}{2L},\infty)$ is defined as
\begin{equation}
\braket{Fz,z-z^\star} \geq \rho \|Fz\|^2
\quad
\forall z\in \R^d, 
\forall z^\star \in \zer F.
\end{equation}
For $\rho < 0$, this condition allows the operator $-Fz$ to point away from the solution set as illustrated in \Cref{fig:weakMVI}.

Notice that since $z^\star \in \zer F$ we could equivalently write
\begin{equation}
\braket{Fz-Fz^\star,z-z^\star} \geq \rho \|Fz-Fz^\star\|^2.
\quad \forall z\in \R^d
\end{equation}
In contrast, $\rho$-comonotonicity of $F$ states that the above condition should hold for all pairs of point in the domain, i.e.
\begin{equation*}
\langle Fz - Fz',z - z'\rangle \geq \rho\|Fz - Fz'\|^2 \quad \forall z,z' \in \R^d.
\end{equation*}
For $\rho < 0$, $\rho$-comonotonicity is also referred to as $|\rho|$-cohypomonotonicity.
We say that the weak MVI is a \emph{star-variant} of comonotonicity.
This is analogue to the relationship between convexity and star-convexity.

For simplicity we state all results in terms of comonotonicity.
However, note that \emph{almost all results in this paper trivially extends to the more relaxed notion of weak MVI}.
The only exception is the last iterate rates in \Cref{thm:inexactResolvent:last} which relies on cohypomonotonicity to prove monotonic decrease through \Cref{eq:lastiter}.

\end{toappendix}

    \vspace{-1.0em}
    \section{Introduction}\label{sec:introduction}
    \vspace{-1.0em}
     Stability is a major concern when training large scale models.
 In particular, generative adversarial networks (GANs) are known to be notoriously difficult to train.
 To stabilize training, the Lookahead algorithm of \citet{zhang2019lookahead} was recently proposed for GANs \cite{chavdarova2020taming} which linearly interpolates with a slow moving iterate.
 The mechanism has enjoyed superior empirical performance in both minimization and minimax problems, but it largely remains a heuristic with little theoretical motivation.
 
 One major obstacle for providing a theoretical treatment, is in capturing the (fuzzy) notion of stability.
 Loosely speaking, a training dynamics is referred to as \emph{unstable} in practice when the iterates either cycle indefinitely or (eventually) diverge---as has been observed for the Adam optimizer (see e.g. \citet[Fig. 12]{gidel2018variational} and \citet[Fig. 6]{chavdarova2020taming} respectively).
 Conversely, a \emph{stable} dynamics has some bias towards stationary points.
 The notion of stability so far (e.g. in \citet[Thm. 2-3]{chavdarova2020taming}) is based on the spectral radius and thus inherently \emph{local}.

\begin{table}[t]
\caption{Overview of last iterate results with our contribution highlighted in \colorbox{SkyBlue!40}{blue}.
Prior to this work there existed no rates for explicit schemes without anchoring handling $\rho$-comonotone problems with $\rho \in (-\nicefrac{1}{2L}, \infty)$ and no global convergence guarantees for Lookahead beyond bilinear games.}
\label{tbl:overview}
\centering
\footnotesize%
\begin{adjustbox}{center}
\bgroup
\def\arraystretch{1.2}
\begin{tabular}{|cc|ccccc|}
\hline
& Method & Setting & $\rho$ & Handles constraints & $\rho$-independent rates & Reference \\
\hline\hline
\multirow{3}{*}{\STAB{\rotatebox[origin=c]{90}{\scriptsize \hspace{10pt} Implicit}}}
	& PP & Comonotone & $(-\nicefrac{1}{2L}, \infty)$ & $\checkmark$ & \xmark   & \cite[Thm. 3.1]{gorbunov2022convergence} \\ %
	& \ref{eq:RPP} & \cellhl Comonotone & \cellhl $(-\nicefrac{1}{2L}, \infty)$ & \cellhl $\checkmark$ & \cellhl $\checkmark$  & \cellhl \Cref{thm:inexactResolvent:last} \\ %
\hline\hline
\multirow{3}{*}{\STAB{\rotatebox[origin=c]{90}{\scriptsize \ Extrapolate}}}
	& EG & Comonotone \& Lips. & $(-\nicefrac{1}{8L}, \infty)$ & \xmark & \xmark & \cite[Thm. 4.1]{gorbunov2022convergence} \\ %
	& \ref{eq:EG+} & Comonotone \& Lips. & \multicolumn{4}{c|}{\ruleline{Unknown rates}} \\
	& \cellhl \RAPP & \cellhl Comonotone \& Lips. & \cellhl $(-\nicefrac{1}{2L}, \infty)$ & \cellhl $\checkmark$ & \cellhl $\checkmark$ & \cellhl \Cref{cor:inexactResolvent} \\ %
\hline\hline
\multirow{5}{*}{\STAB{\rotatebox[origin=c]{90}{\scriptsize Lookahead}}}
	& \multirow{3}{*}{\ref{eq:lookahead}}
		& Local & - & \xmark & - & \cite[Thm. 2]{chavdarova2020taming}\\
	  & & Bilinear & - & \xmark & - & \cite[Cor. 7]{ha2022convergence} \\
		&  & \cellhl Comonotone \& Lips. & \cellhl $(-\nicefrac{1}{3\sqrt{3}L}, \infty)$  & \cellhl \xmark & \cellhl - & \cellhl \Cref{thm:LA:k2}\\
	\hhline{|~|------|}
	& \multirow{2}{*}{\ref{eq:LA-EG}}
 		& Bilinear & - & \xmark & - & \cite[Cor. 8]{ha2022convergence} \\
		& & \cellhl Monotone \& Lips. & \cellhl - & \cellhl $\checkmark$ & \cellhl - & \cellhl \Cref{thm:FBF} \\
	\hhline{|~|------|}
	& \cellhl \ref{eq:LA-CEG+} & \cellhl Comonotone \& Lips. & \cellhl $(-\nicefrac{1}{2L}, \infty)$ & \cellhl $\checkmark$ & \cellhl - & \cellhl \Cref{thm:LA:CEG+} \\
\hline
\end{tabular}
\egroup
\end{adjustbox}
\vspace{-7pt}
\end{table}

In this work, we are interested in establishing \emph{global} convergence properties, in which case some structural assumptions are needed.
One (nonmonotone) structure that lends itself well to the study of stability is that of cohypomonotonicity studied in \citet{combettes2004proximal,diakonikolas2021efficient}, since even the extragradient method has been shown to cycle and diverge in this problem class (see \citet[Fig. 1]{pethick2022escaping} and \citet[Fig. 2]{anonymous2023solving} respectively).
We provide a geometric intuition behind these difficulties in \Cref{fig:weakMVI}.
Biasing the optimization schemes towards stationary points becomes a central concern and we demonstrate in \Cref{fig:forsaken} that Lookahead can indeed converge for such nonmonotone problems.

A principled approach to cohypomonotone problems is the extragradient+ algorithm \eqref{eq:EG+} proposed by \citet{diakonikolas2021efficient}.
However, the only known rates are on the best iterate, which can be problematic to pick in practice. 
It is unclear whether \emph{last} iterate rates for \ref{eq:EG+} are possible even in the monotone case (see discussion prior to Thm. 3.3 in \citet{gorbunov2022extragradient}).
For this reason, the community has instead resorted to showing last iterate of extragradient (EG) method of \citet{korpelevich1977extragradient}, despite originally being developed for the monotone case.
Maybe not surprisingly, EG only enjoys a last iterate guarantee under mild form of cohypomonotonicity and have so far only been studied in the unconstrained case \citep{luoextragradient,gorbunov2022convergence}.
Recently, last iterate rate were established for the same (tight) range of cohypomonotone problems for which \ref{eq:EG+} has best iterate guarantees.
However, the analyzed scheme is \emph{implicit} and the complexity blows up with increasing cohypomonotonicity \citep{gorbunov2022convergence}.
This leaves the questions: \emph{Can an explicit scheme enjoy last iterate rates for the same range of cohypomonotone problems? Can the rate be agnostic to the degree of cohypomonotonicity?}
We answer both in the affirmative.

This work focuses on 1-SCLI schemes \citep{arjevani2015lower,golowich2020last}, whose update rule only depends on the previous iterate in a time-invariant fashion. %
Another approach to establishing last iterate is Halpern-type methods
with an explicit scheme developed in \citet{lee2021fast} for cohypomonotone problems
and later extended to the constrained case in \citet{cai2022accelerated}
(cf. \Cref{app:relatework}).
\looseness=-1

As will become clear, a principled mechanism behind convergence in this nonmonotone class is the linear interpolation also used in Lookahead.
This iterative interpolation is more broadly referred to as the Krasnosel'skiĭ-Mann (KM) iteration 
in the theory of nonexpansive operators.
We show that the extragradient+ algorithm \eqref{eq:EG+} of \citet{diakonikolas2021efficient}, our proposed relaxed approximate proximal point method (\RAPP), and Lookahead based algorithms are all instances of the (inexact) KM iteration
and provide simple proofs of these schemes in the cohypomonotone case.

More concretely we make the following contributions:
\vspace{-3pt}
\begin{enumerate}[leftmargin=1.5em]
  \item We prove global convergence rates for the last iterate of our proposed algorithm \RAPP which additionally handles constrained and regularized settings. This makes \RAPP the first 1-SCLI scheme to have non-asymptotic guarantees for $\rho$-comonotone problems while only requiring $\rho > -\nicefrac{1}{2L}$.
	As a byproduct we obtain a last iterate convergence rate for an implicit scheme that is \emph{independent} of the degree of cohypomonotonicity.
	The last iterate rates are established by showing monotonic decrease of the operator norm--something which is not possible for \ref{eq:EG+}.
	This contrast is maybe surprising, since \RAPP can be viewed as an extension of \ref{eq:EG+}, which simply takes multiple extrapolation steps.
  \item By replacing the inner optimization routine in \RAPP with gradient descent ascent (GDA) and extragradient (EG)  we rediscover the Lookahead algorithms considered in \cite{chavdarova2020taming}.
  We obtain guarantees for the Lookahead variants by deriving nonexpansive properties of the base optimizers.
	By casting Lookahead as a KM iteration we find that the optimal interpolation constant is $\lambda=0.5$. 
	This choice corresponds to the default value used in practice for both minimization and minimax---thus providing theoretical motivation for the parameter value.
  \item For $\tau=2$ inner iterations we observe that \ref{eq:lookahead} reduces to a linear interpolation between GDA and \ref{eq:EG+} which allows us to obtain global convergence in $\rho$-comonotone problems when $\rho > -\nicefrac{1}{3\sqrt{3}L}$. %
  However, for $\tau$ large, we provide a counterexample showing that \ref{eq:lookahead} cannot be guaranteed to converge.
  This leads us to instead propose \ref{eq:LA-CEG+} which corrects the inner optimization to guarantee global convergence for $\rho$-comonotone problems when $\rho > -\nicefrac{1}{2L}$.
  \item We test the methods on a suite of synthetic examples and GAN training where we confirm the stabilizing effect. 
	Interestingly, \RAPP seems to provide a similar benefit as Lookahead, which suggest that linear interpolation could play a key role also experimentally. %
\end{enumerate}
An overview of the theoretical results is provided in \Cref{tbl:overview} and \Cref{fig:overview}§\ref{app:preliminaries}.

    \section{Related work}\label{sec:relatedwork}
    
\paragraph{Lookahead}
The Lookahead algorithm was first introduced for minimization in \citet{zhang2019lookahead}.
In the context of Federated Averaging in federated learning \citep{mcmahan2017communication} and the Reptile algorithm in meta-learning \citep{nichol2018first}, the method can be seen as a single worker and single task instance respectively.
Analysis for Lookahead was carried out for nonconvex minimization \citep{wang2020lookahead,zhou2021towards} and a nested variant proposed in \citep{pushkin2021multilayer}.
\Citet{chavdarova2020taming} popularized the Lookahead algorithm for minimax training by showing state-of-the-art performance on image generation tasks.
Apart from the original local convergence analysis in \citet{chavdarova2020taming} and the bilinear case treated in \cite{ha2022convergence} we are not aware of any convergence analysis for Lookahead for minimax problems and beyond.

\paragraph{Cohypomonotone}
Cohypomontone problems were first studied in \citet{iusem2003inexact, combettes2004proximal} for proximal point methods and later expanded on in greater detail in \citet{bauschke2021generalized}.
The condition was relaxed to the star-variant referred to as the weak Minty variational inequality (MVI) in \citet{diakonikolas2021efficient} and the extragradient+ algorithm \eqref{eq:EG+} was analyzed.
The analysis of \ref{eq:EG+} was later tightened and extended to the constrained case in \citet{pethick2022escaping}.

\paragraph{Proximal point}
The proximal point method (PP) has a long history.
For maximally monotone operators (and thus convex-concave minimax problems) convergence of PP follows from \citet{opial1967weak}.
The first convergence analysis of \emph{inexact} PP dates back to \citet{rockafellar1976monotone,brezis1978produits}.
It was later shown that convergence also holds for the \emph{relaxed} inexact PP as defined in \eqref{eq:inexactResolvent} \citep{eckstein1992douglas}.
In recent times, PP has gained renewed interest due to its success for certain nonmonotone structures.
Inexact PP was studied for cohypomontone problems in \citet{iusem2003inexact}.
Asymptotic convergence was established of the relaxed inexact PP for a sum of cohypomonotone operators in \citet{combettes2004proximal}, and later considered in \citet{grimmer2022landscape} without inexactness.
Last iterate rates were established for PP in $\rho$-comonotone problems (with a dependency on $\rho$) \citep{gorbunov2022convergence}.
Explicit approximations of PP through a contractive map was used for convex-concave minimax problems in \citet{https://doi.org/10.48550/arxiv.2301.03931} and was the original motivation for MirrorProx of \citet{nemirovski2004prox}.
See \Cref{app:relatework} for additional references in the stochastic setting.

    \section{Setup}\label{sec:setup}
    \begin{wrapfigure}{r}{6cm}
\vspace{-35pt}
\centering
\includegraphics[width=0.42\textwidth,trim={2.5cm 9cm 5cm 4cm},clip]{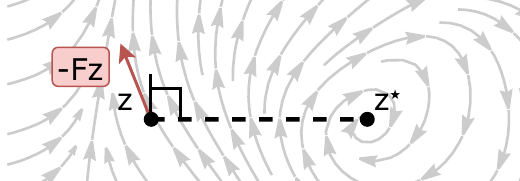}
\caption{Consider $\min _{x \in \mathcal X} \max _{y \in \mathcal Y}\phi(z)$ with $z=(x,y)$. 
As opposed to convex-concave minimax problems, the cohypomonotone condition allows the gradients $Fz= (\nabla_x \phi(z), -\nabla_y \phi(z))$ to point away from the solutions (see \Cref{app:weakMVI-comonotone} for the relationship between cohypomonotonicity and the weak MVI).
This can lead to instability issues for standard algorithms such as the Adam optimizer.}
\label{fig:weakMVI}
\end{wrapfigure}

We are interested in finding a zero of an operator $S:\R^d\rightrightarrows\R^d$ which decomposes into a Lipschitz continuous (but possibly nonmonotone) operator $F$ and a maximally monotone operator $A$, i.e. find $z \in \R^d$ such that,
\begin{equation}\label{eq:StrucIncl}
	0\in Sz := Az + Fz.
\end{equation}
Most relevant in the context of GAN training is that \eqref{eq:StrucIncl} includes constrained minimax problems.
\begin{example}
Consider the following minimax problem
\begin{equation}
\label{eq:minimax}
\min _{x \in \mathcal X} \max _{y \in \mathcal Y}\phi(x, y).
\end{equation}
The problem can be recast as the inclusion problem \eqref{eq:StrucIncl} by defining the joint iterates $z=(x,y)$, the stacked gradients $Fz = (\nabla_x \phi(x,y), -\nabla_y \phi(x,y))$, and $A=(\mathcal{N}_{\mathcal X}, \mathcal{N}_{\mathcal Y})$ where $\mathcal{N}$ denotes the normal cone.
As will become clear (cf. \Cref{alg:inexactResolvent}), $A$ will only be accessed through the resolvent $J_{\gamma A}:=(\id+\gamma A)^{-1}$ which reduces to the proximal operator. More specifically $J_{\gamma A}(z)=(\proj_{\mathcal X}(x), \proj_{\mathcal Y}(y))$.
\end{example}
We will rely on the following assumptions (see \Cref{app:preliminaries} for any missing definitions).
\begin{asmbox}
\begin{assumption}\label{ass:S}
In problem \eqref{eq:StrucIncl}, 
\begin{assnum}
\itemsep0em 
\item\label{ass:A:maxmon}
	The operator $A:\R^d\rightrightarrows\R^d$ is maximally monotone. 
\item \label{ass:F:Lips} The operator $F: \R^d\rightarrow\R^d$ is $L$-Lipschitz, i.e. for some $L \in [0,\infty)$,
    \begin{equation*}
    \|Fz - Fz'\|
    \leq
    L \|z-z'\|
    \quad \forall z,z' \in \R^d.
    \end{equation*}
\item\label{ass:F:cohypo} The operator $S:=F+A$ is $\rho$-comonotone for some $\rho \in (-\nicefrac{1}{2L},\infty)$, i.e.
\begin{equation*}
\langle v - v',z - z'\rangle \geq \rho\|v - v'\|^2 \quad \forall (v,z),(v',z') \in \graph S.
\end{equation*}
\end{assnum}
\end{assumption}
\end{asmbox}
\begin{remark}
\Cref{ass:F:cohypo} is also known as $|\rho|$-cohypomonotonicity when $\rho < 0$, which allows for increasing nonmonotonicity as $|\rho|$ grows.
See \Cref{app:weakMVI-comonotone} for the relationship with weak MVI.
\end{remark}
When only stochastic feedback $\hat F_\sigma(\cdot,\xi)$ is available we make the following classical assumptions.%
\begin{asmbox}
\begin{assumption}\label{ass:stoch}
For the operator $\hat F_\sigma(\cdot,\xi):\R^d\rightarrow\R^d$ the following holds.
\begin{assnum}
\itemsep0em 
\item \label{ass:stoch:unbiased}
    Unbiased:
    \(%
        \mathbb{E}_{\xi}\big[\hat F_\sigma(z,\xi)\big] = Fz \quad \forall z \in \R^d
    \).%
\item  \label{ass:stoch:boundedvar}
    Bounded variance:
    \(%
        \mathbb{E}_{\xi}\big[\|\hat F_\sigma(z,\xi)- Fz\|^2\big] \leq \sigma^2  \quad \forall z,z' \in \R^d.
    \)%
\end{assnum}
\end{assumption}
\end{asmbox}

\begin{toappendix}
\section{Proofs for \Cref{sec:ikm} (Inexact Krasnosel'skiĭ-Mann iterations)}
\end{toappendix}
    \section{Inexact Krasnosel'skiĭ-Mann iterations}\label{sec:ikm}
    \nosectionappendix
    
The main work horse we will rely on is
the inexact Krasnosel'skiĭ-Mann (IKM) iteration from monotone operators (also known as the \emph{averaged} iteration),
which acts on an operator $T:  \mathbb R^d \rightarrow \mathbb R^d$ with inexact feedback,
\begin{equation}\label{eq:IKM}
\tag{IKM}
z^{k+1} = (1 - \lambda) z^k + \lambda (Tz^k + e^k),
\end{equation}
where $\lambda \in (0,1)$ and $e^k$ is a random variable with dependency on all variables up until (and including) $k$. 
The operator $\widetilde{T}_k: z \mapsto Tz + e^k$ can crucially be an iterative optimization scheme in itself. 
This is important, since we can obtain \RAPP, \ref{eq:lookahead} and \ref{eq:LA-CEG+} by plugging in different optimization routines.
In fact, \RAPP is derived by taking $\widetilde{T}_k$ to be a (contractive) fixed point iteration in itself, which approximates the resolvent.

We note that also the extragradient+ \eqref{eq:EG+} method of \citet{diakonikolas2021efficient}, which converges for cohypomonotone and Lipschitz problems, can be seen as a Krasnosel'skiĭ-Mann iteration on an extragradient step
\begin{equation}
\label{eq:EG+}
\tag{EG+}
\begin{split}
\operatorname{EG}(z) &= z - \gamma F(z - \gamma Fz) \\
z^{k+1} &= (1-\lambda)z^k + \lambda \operatorname{EG}(z^k)
\end{split}
\end{equation}
where $\lambda\in(0,1)$.
We provide a proof of \ref{eq:EG+} in \Cref{thm:EG+} which extends to the constrained case using the construction from \citet{pethick2022escaping} but through a simpler argument under fixed stepsize. %

Essentially, the \ref{eq:IKM} iteration leads to a conservative update that stabilizes the update using the previous iterate.
This is the key mechanism behind showing convergence in the nonmonotone setting known as cohypomonotonicity.
Very generally, it is possible to provide convergence guarantees for \ref{eq:IKM} when the following holds (\Cref{thm:KM:best} is deferred to the appendix due to space limitations).
\begin{defbox}
\begin{definition}\label{def:nonexpansiveness}
An operator $T: \mathbb R^d \rightarrow \mathbb R^d$ is said to be quasi-nonexpansive if
\begin{equation}
\|Tz - z'\| \leq \|z-z'\| \quad \forall z \in \mathbb R^d, \forall z'\in \fix T.
\end{equation}
\end{definition}
\end{defbox}
\begin{remark}\label{rem:nonexpansive}
This notion is crucial to us since the resolvent $J_{B}:=(\id + B)^{-1}$ is (quasi)-nonexpansive if $B$ is $\nicefrac{1}{2}$-cohypomonotone \citep[Prop. 3.9(iii)]{bauschke2021generalized}.
\end{remark}
\begin{toappendix}
The \ref{eq:IKM} iteration is well studied (see \citet{combettes2001quasi}). %
The following result refurbishes sub-results of \citet[Prop. 4.2]{combettes2001quasi} to establish a rate of convergence under potentially stochastic feedback.
\begin{thmbox}
\begin{thm}[Convergence of IKM]\label{thm:KM:best}
Suppose $T: \mathbb R^d \rightarrow \mathbb R^d$ is quasi-nonexpansive. Consider the sequence $(z^k)_{k\in \mathbb N}$ generated by \ref{eq:IKM} with $\lambda \in (0,1)$. Then, for all $z^\star \in \fix T$
\begin{equation}
\frac{1}{K}\sum_{k=0}^{K-1}\EF[][\|Tz^k - z^k\|^2] \leq \frac{\|z^0 - z^\star \|^2 + \sum_{k=0}^{K-1}\varepsilon_k(z^\star)}{\lambda(1-\lambda) K}.
\end{equation}
where $\varepsilon_k(z) = 2\lambda \EF[][[\|e^k\|\|z^{k} - z \|]] + \lambda^2\EF[][[\|e^k\|^2]]$. 
Furthermore, $z^k \rightarrow z^\star$ a.s. as long as $\sum_{k=0}^\infty \varepsilon_k(z^\star) < \infty$.
\end{thm}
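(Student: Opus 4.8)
The plan is to run a standard stochastic quasi-Fej\'er (Robbins--Siegmund) argument anchored at an arbitrary $z^\star \in \fix T$, tracking the squared distance $\norm{z^k - z^\star}^2$ and extracting $\norm{Tz^k - z^k}^2$ as the residual that must decay. First I would split each \ref{eq:IKM} step into an \emph{exact} averaged step and the injected error: writing $\tilde z^{k+1} := (1-\lambda)z^k + \lambda Tz^k$ we have $z^{k+1} = \tilde z^{k+1} + \lambda e^k$. For the exact part I would invoke the convex-combination identity $\norm{(1-\lambda)a + \lambda b}^2 = (1-\lambda)\norm{a}^2 + \lambda\norm{b}^2 - \lambda(1-\lambda)\norm{a-b}^2$ with $a = z^k - z^\star$ and $b = Tz^k - z^\star$, noting $\norm{a-b} = \norm{z^k - Tz^k}$. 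Applying quasi-nonexpansiveness (\Cref{def:nonexpansiveness}), i.e. $\norm{Tz^k - z^\star} \le \norm{z^k - z^\star}$, lets me cancel $\lambda\norm{b}^2$ against $\lambda\norm{a}^2$ and obtain the clean exact descent $\norm{\tilde z^{k+1} - z^\star}^2 \le \norm{z^k - z^\star}^2 - \lambda(1-\lambda)\norm{Tz^k - z^k}^2$.

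The delicate step is absorbing the error into \emph{exactly} the form of $\varepsilon_k$. Expanding $\norm{z^{k+1} - z^\star}^2 = \norm{\tilde z^{k+1} - z^\star}^2 + 2\lambda\langle \tilde z^{k+1} - z^\star, e^k\rangle + \lambda^2\norm{e^k}^2$, I would bound the cross term by Cauchy--Schwarz; the key observation is that $\tilde z^{k+1} - z^\star$ is a convex combination of $z^k - z^\star$ and $Tz^k - z^\star$, so by the triangle inequality and quasi-nonexpansiveness $\norm{\tilde z^{k+1} - z^\star} \le \norm{z^k - z^\star}$. This yields $2\lambda\langle\tilde z^{k+1} - z^\star, e^k\rangle \le 2\lambda\norm{e^k}\norm{z^k - z^\star}$, which is precisely the first piece of $\varepsilon_k$. (Expanding instead through $Tz^k + e^k$ directly would leave a $\langle Tz^k - z^k, e^k\rangle$ term that does not match the stated $\varepsilon_k$, so this split is the design choice that makes the bookkeeping work.) Combining the two displays gives the one-step inequality
\[
\norm{z^{k+1} - z^\star}^2 \le \norm{z^k - z^\star}^2 - \lambda(1-\lambda)\norm{Tz^k - z^k}^2 + 2\lambda\norm{e^k}\norm{z^k - z^\star} + \lambda^2\norm{e^k}^2.
\]

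Taking $\EF[k]$ (using that $z^k$ is $\mathcal F_k$-measurable) and then total expectation turns the last two terms into $\varepsilon_k(z^\star)$. Telescoping over $k = 0,\dots,K-1$, discarding the nonnegative $\EF[][\norm{z^K - z^\star}^2]$ on the left, and dividing by $\lambda(1-\lambda)K$ delivers the claimed rate. For the almost sure statement I would feed the same one-step inequality into the Robbins--Siegmund supermartingale convergence theorem: when $\sum_k \varepsilon_k(z^\star) < \infty$ the perturbation is summable in expectation, hence a.s., so $\norm{z^k - z^\star}^2$ converges a.s. and $\sum_k \norm{Tz^k - z^k}^2 < \infty$, forcing $\norm{Tz^k - z^k} \to 0$ a.s.

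I expect the main obstacle to be the final identification of the limit rather than the rate computation. Summability of the residuals only gives $\norm{Tz^k - z^k}\to 0$ and a.s. boundedness of $(z^k)$; to upgrade this to convergence of the whole sequence one must argue that every cluster point lies in $\fix T$ (via demiclosedness of $\id - T$ at $0$, which holds for the (quasi-)nonexpansive resolvents of interest, cf. \Cref{rem:nonexpansive}) and then combine this with Fej\'er monotonicity with respect to $\fix T$ to pin $(z^k)$ to a single fixed point. This is exactly the stochastic quasi-Fej\'er machinery of \citet[Prop. 4.2]{combettes2001quasi}, which I would cite for the limit-identification step.
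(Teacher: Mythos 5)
Your proposal is correct and follows essentially the same route as the paper's proof: the same split into the exact averaged step $s^k=(1-\lambda)z^k+\lambda Tz^k$ plus error, the same convex-combination identity and quasi-nonexpansiveness to get the descent $\|s^k-z^\star\|^2\le\|z^k-z^\star\|^2-\lambda(1-\lambda)\|Tz^k-z^k\|^2$, the same bound $\|s^k-z^\star\|\le\|z^k-z^\star\|$ to absorb the cross term into $\varepsilon_k$, and the same telescoping plus Robbins--Siegmund conclusion. Your closing remarks on limit identification (demiclosedness plus Fej\'er monotonicity, via \citet[Prop.~4.2]{combettes2001quasi}) are more explicit than the paper, which delegates that step to the cited reference.
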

\end{thmbox}
\begin{remark}
Notice that the optimal choice of $\lambda$ in the upper bound is $\lambda=0.5$, which is the default used for the Lookahead algorithm in both for minimax problems \citep{chavdarova2020taming} and minimization \citep{zhang2019lookahead} (see \Cref{sec:lookahead} for a treatment of Lookahead).
\end{remark}
\end{toappendix}
\begin{appendixproof}
We will denote the natural filtration up to iteration $k$ as $\mathcal F_k$ and use $\EF[k][[\cdot]]=\mathbb E[\cdot \mid \mathcal F_{k}]$.
Consider one exact step 
\begin{equation}\label{eq:sk}
s^{k} = (1 - \lambda) z^k + \lambda Tz^k
\end{equation}
Then
\begin{align*}
\|s^{k} - z^\star \|^2 
&= (1-\lambda)\|z^k - z^\star \|^2 + \lambda\|Tz^k - z^\star\|^2 - \lambda(1-\lambda)\|Tz^k - z^k\|^2 \\
\dueto{(quasi-nonexpansive)}&\leq 
  (1-\lambda)\|z^k - z^\star \|^2 + \lambda\|z^k - z^\star\|^2 - \lambda(1-\lambda)\|Tz^k - z^k\|^2 \\
&= \|z^k - z^\star \|^2 - \lambda(1-\lambda)\|Tz^k - z^k\|^2
\numberthis \label{eq:exact:decent}
\end{align*}
So
\begin{equation}\label{eq:Texact:decrease}
\|s^{k} - z^\star \| \leq \|z^k - z^\star \|
\end{equation}
By using triangle inequality and the update rule \ref{eq:IKM} we have,
\begin{equation}
\begin{split}
\EF[k][[\|z^{k+1} - z^\star \|^2]]
&\leq \EF[k][[(\|s^{k} - z^\star \| + \lambda\|e^k\|)^2]] \\
& = \|s^{k} - z^\star \|^2 + 2\lambda\EF[k][[\|e^k\|]]\|s^{k} - z^\star \| + \lambda^2\EF[k][[\|e^k\|^2]] \\
\dueto{\eqref{eq:Texact:decrease}}& \leq 
  \|s^{k} - z^\star \|^2 + 2\lambda\EF[k][[\|e^k\|]]\|z^{k} - z^\star \| + \lambda^2\EF[k][[\|e^k\|^2]] \\
\dueto{\eqref{eq:exact:decent}}& \leq 
  \|z^k - z^\star \|^2 - \lambda(1-\lambda)\|Tz^k - z^k\|^2
   + 2\lambda \EF[k][[\|e^k\|]]\|z^{k} - z^\star \| + \lambda^2\EF[k][[\|e^k\|^2]].
\end{split}
\end{equation}
Using law of total expectation and telescoping obtains the claimed rate.
The claimed asymptotic result follows from the Robbins-Siegmund supermartingale theorem \citep[Prop. 2]{bertsekas2011incremental}.
This completes the proof.
\end{appendixproof}

\begin{toappendix}
\subsection{Generalizing to conic nonexpansiveness}\label{app:conic}

Sufficient conditions for convergence of \ref{eq:IKM} can be relaxed beyond nonexpansiveness as originally shown in \citet[Prop. 2.9]{bartz2022conical}.
\Citet{alacaoglu2024extending} used this weaker condition to establish the first first-order complexity results under cohypomonotonicity with only $\rho > -\gamma$ using a similar construction to \RAPP.
In this section we provide a more direct (but equivalent) condition to conic nonexpansiveness and show that also \RAPP enjoys the weaker requirement of $\rho > -\gamma$.

Notice that to establish Fejér monotonicity in \eqref{eq:exact:decent} we do not strictly need (quasi)-nonexpansiveness. 
It suffice to have
\begin{equation}\label{eq:def:quasi-conic}
\|Tz - z^\star\|^2 \leq \|z-z^\star\|^2 + \beta\|Tz-z\|^2 \quad \forall z \in \mathbb R^d, \forall z^\star \in \fix T,
\end{equation}
for some $\beta < 1$, since then
\begin{align*}
\|s^{k} - z^\star \|^2 
&= (1-\lambda)\|z^k - z^\star \|^2 + \lambda\|Tz^k - z^\star\|^2 - \lambda(1-\lambda)\|Tz^k - z^k\|^2 \\
\dueto{\eqref{eq:def:quasi-conic}}&\leq 
  (1-\lambda)\|z^k - z^\star \|^2 + \lambda\|z^k - z^\star\|^2 - \lambda(1-\beta-\lambda)\|Tz^k - z^k\|^2 \\
&= \|z^k - z^\star \|^2 - \lambda(1-\beta-\lambda)\|Tz^k - z^k\|^2.
\numberthis \label{eq:conic:exact:decent}
\end{align*}
It is clear from \eqref{eq:conic:exact:decent} that the only price we pay for using the weaker condition in \eqref{eq:def:conic} is that the eventually convergence rate will suffer due to $\beta$.
In other words, the rest of the proof for \Cref{thm:KM:best} follows albeit with a different constant in front of $\|Tz^k-z^k\|^2$.

It is easy to show that condition \eqref{eq:def:conic} is implied by $\theta$-conic nonexpansiveness introduced in \citet[Def. 3.1]{bauschke2021generalized} when taking $\beta=\tfrac{\theta-1}{\theta}$.
Formally we have the following proposition.

\begin{definition}[{\citet[Def. 3.1]{bauschke2021generalized}}]\label{eq:def:conic:orig}
The operator $T: \R^d \rightarrow \R^d$ is said to be $\theta$-conic nonexpansive if $N=(1-\theta)\id + \theta T$ for $\theta > 0$ is nonexpansive.
\end{definition}
\begin{prop}
An operator $T: \R^d \rightarrow \R^d$ is $\theta$-conic nonexpansive with $\theta > 0$ if and only if $T$ satisfies the following for $\beta = \tfrac{\theta-1}{\theta} < 1$
\begin{equation}\label{eq:def:conic}
\|Tz - Tz'\|^2 \leq \|z-z'\|^2 + \beta\|(\id-T)z-(\id-T)z'\|^2 \quad \forall z,z' \in \mathbb R^d.
\end{equation}
\end{prop}
\begin{remark}
The condition \eqref{eq:def:conic} is more direct than \Cref{eq:def:conic:orig} in the sense that the auxiliary nonexpansive operator $N$ is avoided.
Notice that the condition is simply a generalization of $\theta$-averageness \citep[Def. 2.1(ii)]{bauschke2021generalized} that allows for any $\theta > 0$ instead of only $\theta \in (0,1)$.
\end{remark}
\begin{proof}
Using that $\id-T=\theta(\id - N)$, the condition \eqref{eq:def:conic} is equivalent to
\begin{equation}
\|Nz - Nz'\|^2 \leq \|z-z'\|^2 - (\theta - 1 - \beta\theta)\|(\id-N)z-(\id-N)z'\|^2.
\end{equation}
If $N$ is nonexpansive (i.e. $\theta - 1 - \beta\theta=0$) we have $\beta = \tfrac{\theta - 1}{\theta}$.
Since $\theta > 0$ it holds that $\beta < 1$.
Conversely, if \eqref{eq:def:conic} holds with $\beta = \tfrac{\theta - 1}{\theta} < 1$ then $\theta > 0$ and $N$ is nonexpansive by construction of $\beta$.
This establishes the equivalence.
\end{proof}

From \citet[Prop. 3.7]{bauschke2021generalized} we know that when $\tilde{\rho} > -1$ the resolvent $J_{A}$ is $\theta$-conic nonexpansive with $\theta=\tfrac{1}{2(\tilde{\rho}+1)}$ iff $A:\R^d \rightrightarrows \R^d$ is $\tilde{\rho}$-comonotone.
Consequently, the resolvent $J_{\gamma S}$ approximated in \RAPP satisfies \eqref{eq:def:conic} with $\beta = \tfrac{\theta-1}{\theta}=-(1+\tfrac{2 \rho}{\gamma})<1$ when $S$ is $\rho$-comonotone with $\rho > -\gamma$.
Convergence can thus be established for \RAPP under the weaker conditions of $\rho > -\gamma$.
Last iterate rates for \RAPP similarly follows, since the arguments for establishing monotonic decrease in \Cref{eq:lastiter} still applies, provided we additionally assume $\lambda \leq 2(1+\tfrac{\rho}{\gamma})$.
A last iterate guarantee for a similar scheme was first mentioned in \citet[Rem. B.6]{alacaoglu2024extending}.

Notice that we originally in \eqref{eq:def:quasi-conic} only required \eqref{eq:def:conic} to hold with respect to the solution.
This quasi-variant is first studied in detail in \citet{alacaoglu2024extending} and generalizes convergence for cohypomonotone problems to convergence for weak Minty variational inequalities for the best iterate.
Under cohypomonotonicity they show optimal last iterate rates using Halpern iteration.

\end{toappendix}

\begin{toappendix}
\section{Proofs for \Cref{sec:onestep} (Approximating the resolvent)}
\end{toappendix}
    \section{Approximating the resolvent}\label{sec:onestep}
    \nosectionappendix

As apparent from \Cref{rem:nonexpansive}, the \ref{eq:IKM} iteration would provide convergence to a zero of the cohypomonotone operator $S$ from \Cref{ass:S} by using its resolvent $T=J_{\gamma S}$. %
However, the update is implicit, so we will instead approximate $J_{\gamma S}$.
Given $z \in \R^d$ we seek $z' \in \R^d$ such that
\begin{equation*}
z' = J_{\gamma S}(z) = (\id + \gamma S)^{-1}z
   = (\id + \gamma A)^{-1}(z - \gamma Fz')
\end{equation*}
This can be approximated with a fixed point iteration of
\begin{equation}
Q_z: w \mapsto (\id + \gamma A)^{-1}(z - \gamma Fw)
\end{equation}
which is a contraction for small enough $\gamma$ since $F$ is Lipschitz continuous.
It follows from Banach's fixed-point theorem \cite{banach1922operations} that the sequence converges linearly.
We formalize this in the following theorem, which additionally applies when only stochastic feedback is available.
\begin{equation}\label{alg:OPS}
w^{t+1} = (\id + \gamma A)^{-1}(z - \gamma \hat F_\sigma(w^t, \xi_t))
\quad
\xi_{t} \sim \mathcal P
\end{equation}
\begin{thmbox}
\begin{lemmarep}\label{thm:banach}
Suppose \Cref{ass:A:maxmon,ass:F:Lips,,ass:stoch}.
Given $z \in \R^d$, the iterates generated by \eqref{alg:OPS} with $\gamma \in (0,\nicefrac{1}{L})$ converges to a neighborhood linearly, i.e.,
\begin{equation}
\mathbb E\big[\|w^{\tau}-J_{\gamma S}(z)\|^2\big] \leq (\gamma L)^{2\tau} \|w^{0}-w^\star\|^2 + \tfrac{\gamma^2}{(1- \gamma L)^2}\sigma^2.
\end{equation}
\end{lemmarep}
\end{thmbox}
\begin{appendixproof}
Let $\zeta^t = Fw^t - \hat F_\sigma(w^t, \xi_t)$. Then the stochastic update in \eqref{alg:OPS} can be written as 
\begin{equation}
w^{t+1} = (\id + \gamma A)^{-1}(z - \gamma Fw^t + \gamma \zeta^t)
\end{equation}
Let $w^\star \in \fix Q_z$ such that
\begin{equation}
\|w^{t+1}-w^\star\|^2 = \|w^{t+1}-Q_z(w^\star)\|^2.
\end{equation}
Due to (firmly) nonexpansiveness of $(\id + \gamma A)^{-1}$ when $A$ is maximally monotone we can go on as
\begin{align*}
\|w^{t+1}-Q_z(w^\star)\|^2 &= \|(\id + \gamma A)^{-1}(z - \gamma Fw^t + \gamma \zeta^t) - (\id + \gamma A)^{-1}(z - \gamma Fw^\star)\|^2  \\
&\leq \|(z - \gamma Fw^t + \gamma \zeta^t) - (z - \gamma Fw^\star)\|^2  \\
& = \gamma^2\|Fw^t-Fw^\star\|^2 + \gamma^2\|\zeta^t\|^2 + 2\gamma^2\braket{\zeta^t,Fw^\star - Fw^t} \\
& \leq \gamma^2 L^2\|w^t-w^\star\|^2 + \gamma^2\|\zeta^t\|^2 + 2\gamma^2\braket{\zeta^t,Fw^\star - Fw^t}
\numberthis\label{eq:contraction}
\end{align*}
where the last inequality follows from Lipschitz continuity of $F$.

Taking expectation and using unbiasedness and bounded variance from \Cref{ass:stoch} we get
\begin{equation}
\mathbb E\big[ \|w^{t+1}-w^\star\|^2 \mid \mathcal F_t \big] 
  \leq \gamma^2 L^2\|w^t-w^\star\|^2 + \gamma^2\sigma^2
\end{equation}
By law of total expectation
\begin{align*}
\mathbb E\big[ \|w^{\tau}-w^\star\| \big] 
  &\leq \gamma^2 L^2 \mathbb E\big[\|w^{\tau-1}-w^\star\|^2\big] + \gamma^2\sigma^2 \\
  &\leq \gamma^4 L^4 \mathbb E\big[\|w^{\tau-2}-w^\star\|^2\big] + \gamma^2(1+\gamma^2 L^2)\sigma^2 \\
  &\leq \cdots 
  \leq (\gamma L)^{2\tau} \mathbb E\big[\|w^{0}-w^\star\|^2\big] + \gamma^2\sigma^2\sum_{t=0}^{\tau-1}(\gamma L)^{2t} \\
  &\leq (\gamma L)^{2\tau} \|w^{0}-w^\star\|^2 + \tfrac{\gamma^2}{(1- \gamma L)^2}\sigma^2
\end{align*}
where the last inequality follows from $\sum_{t=0}^{\infty}a^{t} = \tfrac{1}{1-a}$ when $a < 1$.

By construction $\fix Q_z=\{J_{\gamma S}(z)\}$ 
which completes the proof.
\end{appendixproof}

The resulting update in \eqref{alg:OPS} is identical to GDA but crucially always steps from $z$.
We use this as a subroutine in \RAPP to get convergence under a cohypomonotone operator while only suffering a logarithmic factor in the rate.

\paragraph{Interpretation}\label{sec:ops:inter}

In the special case of the constrained minimax problem in \eqref{eq:minimax}, the application of the resolvent $J_{\gamma S}(z)$ is equivalent to solving the following optimization problem
\begin{equation}
\label{eq:ppm0}
\begin{split}
\min _{x' \in \mathcal{X}} \max _{y' \in \mathcal{Y}}&\Big\{\phi_\mu(x',y') := \phi(x', y')
+\frac{1}{2 \mu}\left\|x'-x\right\|^{2}-\frac{1}{2 \mu}\left\|y'-y\right\|^{2}\Big\}.
\end{split}
\end{equation}
for appropriately chosen $\mu \in (0,\infty)$. 
\eqref{alg:OPS} can thus be interpreted as solving a particular regularized subproblem.
Later we will drop this regularization to arrive at the Lookahead algorithm.

\begin{toappendix}
\section{Proofs for \Cref{sec:lastiterate} (Last iterate under cohypomonotonicity)}
\end{toappendix}
    \section{Last iterate under cohypomonotonicity}\label{sec:lastiterate}
    \nosectionappendix
    As stated in \Cref{sec:onestep}, we can obtain convergence using the approximate resolvent through \Cref{thm:KM:best}.
The convergence is provided in terms of the average, so additional work is needed for a last iterate result.
\ref{eq:IKM} iteration on the approximate resolvent (i.e. $\widetilde T_k(z) = J_{\gamma S}(z) + e^k$) becomes,
\begin{subequations}\label{eq:inexactResolvent}
\begin{align}
\bar z^k & = z^k - v^k 
\quad \text{with} \quad v^k \in \gamma S(\bar z^k) \label{eq:inexactResolvent:1} \\
z^{k+1} &= (1-\lambda)z^k + \lambda (\bar z^k + e^k) \label{eq:inexactResolvent:2}
\end{align}
\end{subequations}
with $\lambda \in (0,1)$ and $\gamma > 0$ and error $e^k \in \R^d$. Without error, \eqref{eq:inexactResolvent} reduces to relaxed proximal point
\begin{equation}\label{eq:RPP}
\tag{Relaxed PP}
z^{k+1} = (1-\lambda)z^k + \lambda J_{\gamma S}(z^k)
\end{equation}

For a last iterate result it remains to argue that the residual $\|J_{\gamma S}(z^k) - z^k\|$ is monotonically decreasing (up to an error we can control).
Showing monotonic decrease  is fairly straightforward if $\lambda=1$ (see \Cref{app:eq:lastiter} and the associated proof). However, we face additional complication due to the averaging, which is apparent both from the proof and the slightly more complicated error term in the following lemma.
\begin{thmbox}
\begin{lemmarep}\label{eq:lastiter}
If $S$ is $\rho$-comonotone with $\rho > -\frac{\gamma}{2}$ then \eqref{eq:inexactResolvent} satisfies for all $z^\star \in \zer S$, $$\|J_{\gamma S}(z^k) - z^k\|^2 \leq \|J_{\gamma S}(z^{k-1}) - z^{k-1}\|^2 + \delta_k(z^\star)$$
where $\delta_k(z):=4\|e^k\|(\|z^{k+1}-z\|+\|z^k-z\|)$.
\end{lemmarep}
\end{thmbox}
\begin{appendixproof}
Rearranging the update \eqref{eq:inexactResolvent:2} and subsequently using \eqref{eq:inexactResolvent:1},
\begin{equation}\label{eq:inexactResolvent:2:alt}
z^k-z^{k+1} = \lambda (z^k - \bar z^k - e^k) = \lambda (v^k - e^k).
\end{equation}
Since $\gamma S$ is $\tfrac{\rho}{\gamma}$-comonotone
\begin{equation}
\begin{split}
\tfrac{\rho}{\gamma} \|v^k - v^{k+1}\|^2 
  &\leq \braket{v^k - v^{k+1}, \bar z^k - \bar z^{k+1}} \\
\dueto{\eqref{eq:inexactResolvent:1}}&= \braket{v^k - v^{k+1}, z^k - v^k - (z^{k+1} - v^{k+1})} \\
&= \braket{v^k - v^{k+1}, z^k - z^{k+1}} - \|v^k - v^{k+1}\|^2 \\
\dueto{\eqref{eq:inexactResolvent:2:alt}}&= \lambda\braket{v^k - v^{k+1}, v^k - e^k} - \|v^k - v^{k+1}\|^2 \\
&= 
  \lambda\|v^k\|^2
  - \lambda\braket{v^{k+1}, v^k} 
  - \|v^k - v^{k+1}\|^2
  + \lambda\braket{v^{k+1} - v^k, e^k} 
\end{split}
\end{equation}
Rearranging
\begin{align*}
0
&\leq
  \lambda\|v^k\|^2
  - \lambda\braket{v^{k+1}, v^k} 
  - (1+\tfrac{\rho}{\gamma}) \|v^k - v^{k+1}\|^2
  + \lambda\braket{v^{k+1} - v^k, e^k} \\
&\leq
  \lambda\|v^k\|^2
  - \lambda\braket{v^{k+1}, v^k} 
  - \tfrac{\lambda}{2} \|v^k - v^{k+1}\|^2
  + \lambda\braket{v^{k+1} - v^k, e^k} \\
&=
  \lambda\|v^k\|^2
  - \tfrac{\lambda}{2}\|v^k\|^2
  - \tfrac{\lambda}{2}\|v^{k+1}\|^2
  + \lambda\braket{v^{k+1} - v^k, e^k} 
\numberthis \label{eq:last:descent}
\end{align*}
where the second inequality follows from observing that $(1+\tfrac{\rho}{\gamma}) > \tfrac{1}{2} > \tfrac{\lambda}{2}$ since $\lambda \in (0,1)$.
It remain to bound the error term. 
Since $\gamma S$ is $\nicefrac{1}{2}$-cohypomonotone the resolvent $J_{\gamma S}$ is nonexpansive. Thus,
\begin{equation}\label{eq:last:nonepansive}
\|\bar z^k - z^\star\| \leq \|z^k - z^\star\|.
\end{equation}
Using Cauchy-Schwarz and the triangle inequality,
\begin{align*}
\braket{v^{k+1} - v^k, e^k} 
&\leq \|e^k\|\|v^{k+1} - v^k\| \\
&\leq \|e^k\|(\|v^{k+1}\|+\|v^k\|) \\
&= \|e^k\|(\|\bar z^{k+1}-z^{k+1}\|+\|\bar z^k-z^k\|) \\
&\leq \|e^k\|(\|z^{k+1}-z^\star\|+\|z^k-z^\star\| + \|\bar z^{k+1}-z^\star\|+\|\bar z^k-z^\star|\|) \\
\dueto{\eqref{eq:last:nonepansive}}&\leq 2\|e^k\|(\|z^{k+1}-z^\star\|+\|z^k-z^\star\|) %
\numberthis \label{eq:last:error}
\end{align*}
Combining \eqref{eq:last:descent} and \eqref{eq:last:error},
\begin{equation}
\tfrac{1}{2}\|v^{k+1}\|^2 \leq \tfrac{1}{2}\|v^k\|^2 + 2\|e^k\|(\|z^{k+1}-z^\star\|+\|z^k-z^\star\|).
\end{equation}
Substituting in the resolvent using \eqref{eq:inexactResolvent:1} completes the proof.
\end{appendixproof}
\begin{toappendix}
\noindent The proof of \Cref{eq:lastiter} simplifies for $\lambda = 1$.
Consider one application of the inexact resolvent with error $e \in \R^d$,
\begin{equation}\label{app:eq:inexactResolvent}
z' = J_{\gamma S}(z) + e,
\end{equation}
where $\lambda \in (0,1)$ and $\gamma > 0$.
\begin{thmbox}
\begin{lemma}\label{app:eq:lastiter}
If $S$ is $\rho$-comonotone with $\rho > -\frac{\gamma}{2}$ then \eqref{app:eq:inexactResolvent} satisfies $\|J_{\gamma S}(z') - z'\| \leq \|J_{\gamma S}(z) - z\| + 2\|e\|$.
\end{lemma}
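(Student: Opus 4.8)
The plan is to reduce the claim to a statement about the two residual vectors $v := z - J_{\gamma S}(z)$ and $v' := z' - J_{\gamma S}(z')$, which by definition of the resolvent are graph elements $v \in \gamma S(J_{\gamma S}(z))$ and $v' \in \gamma S(J_{\gamma S}(z'))$. Since $\|J_{\gamma S}(z) - z\| = \|v\|$ and $\|J_{\gamma S}(z') - z'\| = \|v'\|$, the target inequality is exactly $\|v'\| \le \|v\| + 2\|e\|$. First I would record the consequence of the hypothesis $\rho > -\gamma/2$: it makes $\gamma S$ be $(\rho/\gamma)$-comonotone with $\rho/\gamma > -\tfrac12$, which in particular yields, at the two relevant graph pairs, the single key estimate $\braket{v - v', \bar z - \bar z'} \ge -\tfrac12\|v - v'\|^2$, where I abbreviate $\bar z := J_{\gamma S}(z)$ and $\bar z' := J_{\gamma S}(z')$.

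Next I would eliminate the resolvent values in favour of $v$, $v'$, $e$. The update $z' = \bar z + e$ gives $z - z' = v - e$, and combining this with $\bar z = z - v$ and $\bar z' = z' - v'$ produces the clean identity $\bar z - \bar z' = (z - z') - (v - v') = v' - e$. Substituting it into the key estimate turns it into $\braket{v - v', v' - e} \ge -\tfrac12\|v - v'\|^2$. Expanding the inner product and the square, the mixed $\braket{v, v'}$ terms cancel and what survives is simply $\tfrac12\|v'\|^2 \le \tfrac12\|v\|^2 + \braket{v' - v, e}$.

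Finally I would bound the error term by $\braket{v' - v, e} \le \|v' - v\|\,\|e\| \le (\|v'\| + \|v\|)\|e\|$ via Cauchy--Schwarz and the triangle inequality. The only real subtlety I anticipate is precisely this last step: because $\|v'\|$ reappears on the right-hand side, one cannot isolate it directly. The remedy is to complete the square---after multiplying through by $2$ and adding $\|e\|^2$ to both sides the inequality becomes $(\|v'\| - \|e\|)^2 \le (\|v\| + \|e\|)^2$---and then take nonnegative square roots to read off $\|v'\| \le \|v\| + 2\|e\|$, which is the claim. Compared with the general $\lambda \in (0,1)$ case of \Cref{eq:lastiter}, this argument is notably shorter precisely because taking $\lambda = 1$ lets us keep everything in terms of $\|v\|$ and $\|v'\|$ and thereby bypass the detour through a fixed point $z^\star$ and the nonexpansiveness of $J_{\gamma S}$.
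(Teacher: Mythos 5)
Your proof is correct, but it takes a genuinely different route from the paper's. The paper's proof is three lines: it writes $\|J_{\gamma S}(z')-z'\|=\|J_{\gamma S}(z')-J_{\gamma S}(z)-e\|$, applies the triangle inequality, invokes nonexpansiveness of $J_{\gamma S}$ (which holds because $\gamma S$ is $\nicefrac12$-cohypomonotone, cf.\ \Cref{rem:nonexpansive}), and finishes with one more triangle inequality via $\|z'-z\|\leq\|J_{\gamma S}(z)-z\|+\|e\|$. You instead work directly with the graph elements $v\in\gamma S(\bar z)$, $v'\in\gamma S(\bar z')$, apply the comonotonicity inequality $\braket{v-v',\bar z-\bar z'}\geq-\tfrac12\|v-v'\|^2$ at that single pair, use the identity $\bar z-\bar z'=v'-e$ to reduce it to $\tfrac12\|v'\|^2\leq\tfrac12\|v\|^2+\braket{v'-v,e}$, and then handle the self-referential error bound by completing the square; I checked the algebra and each step is valid (in particular $(\|v'\|-\|e\|)^2\leq(\|v\|+\|e\|)^2$ does yield $\|v'\|\leq\|v\|+2\|e\|$ even when $\|v'\|<\|e\|$). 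The paper's argument is shorter but outsources the key fact to the nonexpansiveness of the resolvent; yours is self-contained and, more interestingly, is structurally the $\lambda=1$ specialization of the paper's proof of \Cref{eq:lastiter}, which makes the relationship between the two lemmas transparent---the complete-the-square step is exactly what replaces the detour through $z^\star$ and \eqref{eq:last:nonepansive} that the general $\lambda\in(0,1)$ case cannot avoid. Both arguments use full (non-star) cohypomonotonicity at a pair of non-stationary points, consistent with the paper's remark that the last-iterate results are the ones that do not extend to weak MVI.
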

\end{thmbox}
\begin{proof}
Since $\gamma S$ is $\nicefrac{1}{2}$-cohypomonotone the resolvent $J_{\gamma S}$ is nonexpansive. Thus,
\begin{align*}
\|J_{\gamma S}(z') - z'\| 
  &= \|J_{\gamma S}(z') - J_{\gamma S}(z) -  e\| \\
\dueto{(triangle ineq.)}&\leq 
  \|J_{\gamma S}(z') - J_{\gamma S}(z)\| + \|e\| \\
\dueto{(nonexpansive)}&\leq 
  \|z' - z\| + \|e\| \\
\dueto{(triangle ineq.)}&\leq 
  \|J_{\gamma S}(z) - z\| + 2\|e\|
\end{align*}
This completes the proof.
\end{proof}

Furthermore, the iterates of \eqref{eq:inexactResolvent} are bounded in the following sense.
\begin{lemma}\label{lm:iters:bounded}
Consider the sequence $(z^k)_{k\in \mathbb N}$ generated by \eqref{eq:inexactResolvent} with $\lambda \in (0,1)$ and $\rho > -\frac{\gamma}{2}$.
Then for any $z^\star \in \zer S$,
\begin{equation}
\|z^{k+1} - z^\star\| \leq \|z^{0} - z^\star\| + \lambda\sum_{j=0}^k\|e^j\|.
\end{equation}
\end{lemma}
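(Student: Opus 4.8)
The plan is to prove the bound by establishing a one-step inequality of the form $\|z^{k+1}-z^\star\| \leq \|z^k - z^\star\| + \lambda\|e^k\|$ and then telescoping over $k$. The two structural facts I would exploit are already available in the excerpt. First, since $z^\star \in \zer S$ we have $0 \in \gamma S(z^\star)$, so $z^\star = (\id + \gamma S)^{-1}(z^\star)$, i.e. $z^\star \in \fix J_{\gamma S}$. Second, the hypothesis $\rho > -\tfrac{\gamma}{2}$ means that $\gamma S$ is $\tfrac{\rho}{\gamma}$-comonotone with $\tfrac{\rho}{\gamma} > -\tfrac12$, hence $\gamma S$ is $\tfrac12$-cohypomonotone and by \Cref{rem:nonexpansive} its resolvent $J_{\gamma S}$ is (quasi-)nonexpansive. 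This is exactly the property already invoked as \eqref{eq:last:nonepansive} in the proof of \Cref{eq:lastiter}, giving
\[
\|\bar z^k - z^\star\| = \|J_{\gamma S}(z^k) - z^\star\| \leq \|z^k - z^\star\|.
\]

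With these in hand, I would rewrite the update \eqref{eq:inexactResolvent:2} as a convex combination centered at $z^\star$,
\[
z^{k+1} - z^\star = (1-\lambda)(z^k - z^\star) + \lambda(\bar z^k - z^\star) + \lambda e^k,
\]
and apply the triangle inequality together with the nonexpansiveness bound above:
\begin{align*}
\|z^{k+1} - z^\star\|
&\leq (1-\lambda)\|z^k - z^\star\| + \lambda\|\bar z^k - z^\star\| + \lambda\|e^k\| \\
&\leq (1-\lambda)\|z^k - z^\star\| + \lambda\|z^k - z^\star\| + \lambda\|e^k\| \\
&= \|z^k - z^\star\| + \lambda\|e^k\|.
\end{align*}

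The final step is a routine induction on $k$: applying the one-step bound repeatedly and summing the errors yields $\|z^{k+1} - z^\star\| \leq \|z^0 - z^\star\| + \lambda\sum_{j=0}^k\|e^j\|$, which is the claim. There is essentially no hard step here; the only point requiring care is verifying that the assumption $\rho > -\tfrac{\gamma}{2}$ (on $S$) indeed translates into $\gamma S$ being $\tfrac12$-cohypomonotone so that $J_{\gamma S}$ is nonexpansive, since everything else is a direct consequence of the triangle inequality and telescoping. This lemma is a boundedness companion to \Cref{eq:lastiter}, and its role is precisely to control the terms $\|z^{k+1}-z^\star\|$ and $\|z^k-z^\star\|$ appearing in the error bound $\delta_k(z^\star)$ there.
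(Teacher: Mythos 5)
Your proposal is correct and follows essentially the same route as the paper's proof: nonexpansiveness of $J_{\gamma S}$ from the $\nicefrac{1}{2}$-cohypomonotonicity of $\gamma S$, a triangle-inequality bound on the convex combination in \eqref{eq:inexactResolvent:2}, and telescoping the resulting one-step inequality. No gaps.
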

\begin{proof}
Since $\gamma S$ is $\nicefrac{1}{2}$-cohypomonotone the resolvent $J_{\gamma S}$ is nonexpansive. Thus,
\begin{equation}\label{eq:iters:bounded:nonepansive}
\|\bar z^k - z^\star\| \leq \|z^k - z^\star\|.
\end{equation}
We use the update rule
\begin{align*}
\|z^{k+1}-z^\star\| &= \|(1-\lambda)z^{k}+\lambda(\bar z^k + e^k)-z^\star\| \\
  &\leq \|(1-\lambda)(z^{k} - z^\star)+\lambda(\bar z^k - z^\star)\| + \lambda\|e^k\| \\
  &\leq (1-\lambda)\|z^{k} - z^\star\| + \lambda\|\bar z^k - z^\star\| + \lambda\|e^k\| \\
  \dueto{\eqref{eq:iters:bounded:nonepansive}}&\leq \|z^{k} - z^\star\| + \lambda\|e^k\|
  \numberthis \label{eq:iters:bounded}
\end{align*}
By recursively applying \eqref{eq:iters:bounded} we obtain the claim. 
\end{proof}
\end{toappendix}
The above lemma allows us to obtain last iterate convergence for \ref{eq:IKM} on the inexact resolvent by combing the lemma with \Cref{thm:KM:best}.
\begin{thmbox}
\begin{thmrep}[Last iterate of inexact resolvent]\label{thm:inexactResolvent:last}
Suppose \Cref{ass:S,ass:stoch} with $\sigma_k$.
Consider the sequence $(z^k)_{k\in \mathbb N}$ generated by \eqref{eq:inexactResolvent} with $\lambda \in (0,1)$ and $\rho > -\frac{\gamma}{2}$. 
Then, for all $z^\star \in \zer S$,
\begin{equation*}
\begin{split}
\EF[][[\|J_{\gamma S}(z^K) - z^{K}\|^2]]
&\leq 
  \frac{\|z^0 - z^\star \|^2 + \sum_{k=0}^{K-1}\varepsilon_k(z^\star)}{\lambda(1-\lambda) K} 
  \quad + \frac 1K \sum_{k=0}^{K-1} \sum_{j=k}^{K-1} \delta_j(z^\star),
\end{split}
\end{equation*}
where $\varepsilon_k(z) := 2\lambda \EF[][[\|e^k\|\|z^{k} - z \|]] + \lambda^2\EF[][[\|e^k\|^2]]$ and $\delta_k(z):=4\EF[][[\|e^k\|(\|z^{k+1}-z\|+\|z^k-z\|)]]$.
\end{thmrep}
\end{thmbox}
\begin{remark}
Notice that the rate in \Cref{thm:inexactResolvent:last} has \emph{no} dependency on $\rho$.
Specifically, it gets rid of the factor $\gamma/(\gamma + 2\rho)$ which \citet[Thm. 3.2]{gorbunov2022convergence} shows is unimprovable for PP.
\Cref{thm:inexactResolvent:last} requires that the iterates stays bounded. 
In \Cref{cor:inexactResolvent} we will assume bounded diameter for simplicity, but it is relatively straightforward to show that the iterates can be guaranteed to be bounded by controlling the inexactness (see \Cref{lm:iters:bounded}).
\end{remark}
\begin{appendixproof}
By taking $T=J_{\gamma S}$ in \Cref{thm:KM:best} we have
\begin{equation}\label{eq:inexactResolvent:average}
\frac{1}{K}\sum_{k=0}^{K-1}\EF[][[\|J_{\gamma S}(z^k) - z^k\|^2]] \leq \frac{\|z^0 - z^\star \|^2 + \sum_{k=0}^{K-1}\varepsilon_k(z^\star)}{\lambda(1-\lambda) K}.
\end{equation}
From \Cref{eq:lastiter} (and law of total expectation) we obtain,
\begin{equation}
K\EF[][[\|J_{\gamma S}(z^K) - z^{K}\|^2]] \leq \sum_{k=0}^{K-1}\EF[][[\|J_{\gamma S}(z^k) - z^{k}\|^2]] + \sum_{k=0}^{K-1} \sum_{j=k}^{K-1} \delta_j(z^\star).
\end{equation}
Dividing by $K$ and combining with \eqref{eq:inexactResolvent:average} yields the rate.
Noticing that $\fix J_{\gamma S} = \zer S$ completes the proof.
\end{appendixproof}

\begin{algorithm}[t]
\caption{Relaxed approximate proximal point method (RAPP)}\label{alg:inexactResolvent}
\begin{algorithmic}[1]
	\Require
		\(z^0  \in\R^d\)
		$\lambda \in (0,1)$,
		$\gamma \in(\lfloor-2 \rho\rfloor_{+}, \nicefrac 1L)$

\item[\algfont{Repeat} for \(k=0,1,\ldots\) until convergence]

\State 
  $w^0_k = z^k$
\ForAll{\(t=0,1,\ldots,\tau-1\)}
  \State $\xi_{k,t} \sim \mathcal P$
  \State
    $w^{t+1}_k = (\id + \gamma A)^{-1}(z^k - \gamma \hat F_{\sigma_k}(w^t_k, \xi_{k,t}))$
\EndFor
\State 
  $z^{k+1} = (1-\lambda)z^k + \lambda w^{\tau}_k$
\item[\algfont{Return}]
	\(z^{k+1}\)
\end{algorithmic}
\end{algorithm}
All that remains to get convergence of the explicit scheme in \RAPP, is to expand and simplify the errors $\varepsilon_k(z)$ and $\delta_k(z)$ using the approximation of the resolvent analyzed in \Cref{thm:banach}.
\begin{toappendix}
\begin{thmbox}
\begin{cor}[Explicit inexact stochastic resolvent]\label{cor:inexactResolvent:stoc}
Suppose \Cref{ass:S,ass:stoch} with $\sigma_k$ for all $k \in \mathbb N$.
Consider the sequence $(z^k)_{k\in \mathbb N}$ generated by \RAPP with $\rho > -\frac{\gamma}{2}$.
Then, for all $z^\star \in \zer S$ with $D := \sup_{j \in \mathbb N}\|z^j - z^\star\| < \infty$,
\begin{cornum}
\item \label{cor:inexactResolvent:stoc:best}
  with $\sigma_k^2 = \nicefrac{\sigma_0^2}{k^2}$ and $\tau = \frac{\log K}{\log (\nicefrac{1}{\gamma L})}$,
  \begin{equation*}
  \begin{split}
  \frac{1}{K}\sum_{i=0}^{K-1}\EF[][[\|J_{\gamma S}(z^k) - z^{k}\|^2]]
  &\leq 
    \frac{\|z^0 - z^\star \|^2}{\lambda(1-\lambda) K} 
    + \mathcal O\Big( 
        \max \big\{
                \tfrac{D^2}{(1-\lambda)K}, 
                \tfrac{\gamma D\sigma_0}{(1-\gamma L)(1-\lambda)K}
              \big\}
          \\ &\qquad 
              + \tfrac{\lambda D^2}{(1-\lambda)K} 
              + \tfrac{\lambda\gamma^2 \sigma_0^2}{(1-\gamma L)^2(1-\lambda)K^2}
     \Big). %
  \end{split}
  \end{equation*}
\item \label{cor:inexactResolvent:stoc:last}
  with $\sigma_k^2 = \nicefrac{\sigma_0^2}{k^3}$ and $\tau = \frac{\log K^2}{\log (\nicefrac{1}{\gamma L})}$,
  \begin{equation}
  \begin{split}
  \EF[][[\|J_{\gamma S}(z^K) - z^{K}\|^2]] 
    & \leq 
      \frac{\|z^0 - z^\star \|^2}{\lambda(1-\lambda) K}
      + \mathcal O\left( 
          \max \{\tfrac{D^2}{K}, \tfrac{8\gamma D \sigma_0}{(1-\gamma L)\sqrt{K}}\}
      \right) \\
      &\qquad + \mathcal O\Big( 
          \max \big\{
                  \tfrac{D^2}{(1-\lambda)K^2}, 
                  \tfrac{2\gamma D\sigma_0}{(1-\gamma L)(1-\lambda)K^{3/2}}
                \big\}
              \\ &\qquad
                + \tfrac{\lambda D^2}{(1-\lambda)K^{2}} 
                + \tfrac{\lambda\gamma^2 \sigma_0^2}{(1-\gamma L)^2(1-\lambda)K^3} 
      \Big) %
      \\
  \end{split}
  \end{equation}
\end{cornum}
\end{cor}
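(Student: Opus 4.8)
The plan is to specialize the abstract error bounds of \Cref{thm:inexactResolvent:last} (for the last-iterate part (ii)) and of its internal averaged estimate, obtained by taking $T=J_{\gamma S}$ in \Cref{thm:KM:best} (for the average part (i)), to the concrete scheme \RAPP, by controlling the only remaining free quantity: the inexactness $e^k$. First I would pin down $e^k$ explicitly. Comparing the relaxation step $z^{k+1}=(1-\lambda)z^k+\lambda w^\tau_k$ of \Cref{alg:inexactResolvent} with \eqref{eq:inexactResolvent:2}, and using that the inner map of \Cref{alg:OPS} satisfies $\fix Q_{z^k}=\{J_{\gamma S}(z^k)\}$, gives $e^k=w^\tau_k-J_{\gamma S}(z^k)$. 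Then \Cref{thm:banach}, applied with $z=z^k$, $w^0=z^k$, and $w^\star=J_{\gamma S}(z^k)$, yields $\mathbb E[\|e^k\|^2\mid\mathcal F_k]\le (\gamma L)^{2\tau}\|z^k-J_{\gamma S}(z^k)\|^2+\tfrac{\gamma^2}{(1-\gamma L)^2}\sigma_k^2$. The initialization residual is then bounded with the diameter assumption $D<\infty$ and nonexpansiveness of $J_{\gamma S}$ (valid since $z^\star\in\fix J_{\gamma S}$): $\|z^k-J_{\gamma S}(z^k)\|\le\|z^k-z^\star\|+\|J_{\gamma S}(z^\star)-J_{\gamma S}(z^k)\|\le 2D$, so that $\mathbb E[\|e^k\|^2]\le 4D^2(\gamma L)^{2\tau}+\tfrac{\gamma^2}{(1-\gamma L)^2}\sigma_k^2$.

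Next I would convert this into the two quantities that actually enter $\varepsilon_k(z^\star)$ and $\delta_k(z^\star)$, namely $\mathbb E[\|e^k\|]$ and $\mathbb E[\|e^k\|^2]$. By Jensen and $\sqrt{a+b}\le\sqrt a+\sqrt b$, $\mathbb E[\|e^k\|]\le 2D(\gamma L)^{\tau}+\tfrac{\gamma}{1-\gamma L}\sigma_k$, and the diameter bound lets me replace every $\|z^k-z^\star\|$, $\|z^{k\pm1}-z^\star\|$ factor in $\varepsilon_k,\delta_k$ by $D$. Substituting the prescribed schedules is what collapses the two error sources: $\tau=\log K/\log(\nicefrac1{\gamma L})$ gives $(\gamma L)^\tau=1/K$ in part (i) and $\tau=\log K^2/\log(\nicefrac1{\gamma L})$ gives $(\gamma L)^\tau=1/K^2$ in part (ii), which together with $\sigma_k^2=\sigma_0^2/k^2$ resp. $\sigma_0^2/k^3$ yields $\mathbb E[\|e^k\|]\lesssim\max\{D/K,\tfrac{\gamma\sigma_0}{(1-\gamma L)k}\}$ resp. $\max\{D/K^2,\tfrac{\gamma\sigma_0}{(1-\gamma L)k^{3/2}}\}$.

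For part (i) only the averaged estimate is required, so it remains to bound $\tfrac{1}{\lambda(1-\lambda)K}\sum_{k=0}^{K-1}\varepsilon_k(z^\star)$: the cross term $2\lambda\,\mathbb E[\|e^k\|\,\|z^k-z^\star\|]\le 2\lambda D\,\mathbb E[\|e^k\|]$ produces the $\max\{\cdot,\cdot\}$ bracket after summing the $K$ deterministic copies of $D/K$ and the convergent series $\sum_k 1/k^2$, while the quadratic term $\lambda^2\mathbb E[\|e^k\|^2]$ supplies the last two summands. For part (ii) I would additionally handle the last-iterate correction $\tfrac1K\sum_{k=0}^{K-1}\sum_{j=k}^{K-1}\delta_j(z^\star)$ of \Cref{thm:inexactResolvent:last}. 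The decisive manipulation is the reindexing $\sum_{k=0}^{K-1}\sum_{j=k}^{K-1}\delta_j=\sum_{j=0}^{K-1}(j+1)\delta_j$, exposing a weight $(j+1)$ on each $\delta_j\lesssim D\,\mathbb E[\|e^j\|]$. The stochastic part then behaves like $\tfrac{\gamma D\sigma_0}{1-\gamma L}\sum_j (j+1)j^{-3/2}=\mathcal O(\sqrt K)$, giving the advertised $1/\sqrt K$ term after dividing by $K$, whereas the deterministic part $D^2(\gamma L)^{2\tau}\sum_j(j+1)=\mathcal O(D^2)$ (using $(\gamma L)^{2\tau}=1/K^4$) gives the $D^2/K$ term.

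The hard part will be exactly this $(j+1)$-weighted double sum. It is what forces, in part (ii), both the faster noise decay $\sigma_k^2=\sigma_0^2/k^3$ and the larger inner-loop count $\tau=\log K^2/\log(\nicefrac1{\gamma L})$ relative to part (i): one must simultaneously guarantee that $\sum_j(j+1)\sigma_j$ grows no faster than $\mathcal O(\sqrt K)$ and that the geometric residual $(\gamma L)^{2\tau}$ is small enough to defeat the $\sum_j(j+1)=\Theta(K^2)$ growth of its own weight. Balancing these two decaying sources against the linear weight—rather than any single inequality—is the delicate bookkeeping; everything else reduces to Jensen, Cauchy--Schwarz, and summation of $p$-series.
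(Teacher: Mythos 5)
Your proposal follows the paper's own proof essentially step for step: identify $e^k=w^\tau_k-J_{\gamma S}(z^k)$, bound it via \Cref{thm:banach} together with $\|z^k-J_{\gamma S}(z^k)\|\le 2D$, substitute into $\varepsilon_k$ and $\delta_k$, and choose $\tau$ so that $(\gamma L)^\tau$ equals $1/K$ resp.\ $1/K^2$; your explicit reindexing $\sum_{k}\sum_{j\ge k}\delta_j=\sum_j(j+1)\delta_j$ is exactly the estimate the paper uses implicitly for the double sum. The only blemish is a harmless exponent slip in your last paragraph: since $\delta_j\lesssim D\,\mathbb E[\|e^j\|]$, the deterministic part of the weighted sum carries $(\gamma L)^{\tau}=1/K^2$ (not $(\gamma L)^{2\tau}=1/K^4$), and it is this first power that produces the advertised $D^2/K$ term after dividing by $K$.
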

\end{thmbox}
\begin{remark}\label{rem:averagedAPPM:last}
The assumption on the noise $\sigma_k^2 = \sigma_0^2/n_k$ can be achieved by taking the batch size as $n_k$, i.e.
\begin{equation}
\hat F_{\sigma_k}(z,\xi) = \frac{1}{n_k} \sum_{i=0}^{n_k} \hat F_{\sigma_0}(z,\xi_i).
\end{equation}
This is clear by simple computation.
Observe that the random variable $X_i := \hat F_{\sigma}(z,\xi_i) - Fz$ is i.i.d. with $\operatorname {Var}(X_i)= \sigma^2$.
Then, the average, ${\overline {X}}_{n} = \tfrac{1}{n} (X_1 + \cdots + X_n)$, has a variance as follows
\begin{equation*}
\begin{split}
\operatorname {Var} ({\overline {X}}_{n})&=\operatorname {Var} ({\tfrac {1}{n}}(X_{1}+\cdots +X_{n})) 
  ={\frac {1}{n^{2}}}\operatorname {Var} (X_{1}+\cdots +X_{n})={\frac {n\sigma ^{2}}{n^{2}}}={\frac {\sigma ^{2}}{n}}.
\end{split}
\end{equation*}
We note that increasing batch size might be unfavorable in some applications, but the alternative of diminishing stepsize only leads to only asymptotic convergence of the last iterate (as in e.g. \citet{anonymous2023solving}).
\end{remark}
\begin{proof}
The theorem follows from combing \Cref{thm:banach} with \Cref{thm:KM:best,thm:inexactResolvent:last}.
Invoke \Cref{thm:KM:best,thm:inexactResolvent:last} with $e^k = w^\tau_k - J_{\gamma S}(z^k)$ and $\sigma = \sigma_k$ and note that the error $e^k$ can be bounded through \Cref{thm:banach} as
\begin{align*}
\EF[k][[\|e^k\|^2]] = \|w^\tau_k - J_{\gamma S}(z^k)\|^2 
&\leq \gamma^{2\tau} L^{2\tau} \|w^0_k - J_{\gamma S}(z^k)\|^2 + \tfrac{\gamma^2}{(1-\gamma L)^2}\sigma_k^2 \\
&= \gamma^{2\tau} L^{2\tau} \|z^k - J_{\gamma S}(z^k)\|^2 + \tfrac{\gamma^2}{(1-\gamma L)^2}\sigma_k^2.
\numberthis\label{eq:cor:inexactResolvent:last:noise}
\end{align*}
The former term can in turn be bounded through the triangle inequality
\begin{equation}
\|z^k - J_{\gamma S}(z^k)\| 
  \leq \|z^k - z^\star\| + \|J_{\gamma S}(z^k) - z^\star\|
  \leq 2\|z^k - z^\star\|
  \leq 2D.
\end{equation}
with $D := \sup_{j \in \mathbb N}\|z^j - z^\star\|$ and where the second last inequality follows from $z^\star \in \fix J_{\gamma S}$ and nonexpansiveness of $J_{\gamma S}$.
Plugging into \eqref{eq:cor:inexactResolvent:last:noise} we have,
\begin{equation}
\EF[k][[\|e^k\|^2]]
  \leq 4\gamma^{2\tau} L^{2\tau} D^2 + \tfrac{\gamma^2}{(1-\gamma L)^2}\sigma_k^2,
\end{equation}
and
\begin{equation}
\EF[k][[\|e^k\|]]
  \leq \sqrt{4\gamma^{2\tau} L^{2\tau} D^2 + \tfrac{\gamma^2}{(1-\gamma L)^2}\sigma_k^2}
  \leq \max \{2\gamma^{\tau} L^{\tau} D, \tfrac{\gamma}{1-\gamma L}\sigma_k\}.
\end{equation}
Substituting into the expression of $\delta_k(z^\star)$ and $\varepsilon_k(z^\star)$ yields,
\begin{align*}
\delta_k(z^\star) &\leq \max \{16\gamma^{\tau} L^{\tau} D^2, \tfrac{8\gamma}{1-\gamma L}\sigma_k D\} \\
\varepsilon_k(z^\star) &\leq \max \{4\lambda\gamma^{\tau} L^{\tau} D^2, \tfrac{2\lambda\gamma}{1-\gamma L}\sigma_k D\} + 4\lambda^2 \gamma^{2\tau} L^{2\tau} D^2 + \tfrac{\lambda^2\gamma^2}{(1-\gamma L)^2}\sigma_k^2.
\end{align*}
Consequently, with the choice $\sigma_k^2=\nicefrac{\sigma_0^2}{k^2}$,
\begin{equation}
\begin{split}
\frac{\sum_{k=0}^{K-1}\varepsilon_k(z^\star)}{\lambda(1-\lambda) K} 
  &\leq %
      \max \big\{
        \frac{4\gamma^{\tau} L^{\tau} D^2}{1-\lambda}, 
        \tfrac{2\gamma D\sigma_0}{(1-\gamma L)(1-\lambda)K}
      \big\} 
      + \frac{4\lambda \gamma^{2\tau} L^{2\tau} D^2}{{1-\lambda}} 
      + \tfrac{\lambda\gamma^2 \sigma_0^2}{(1-\gamma L)^2(1-\lambda)K^2}
    .
\end{split}
\end{equation}
We ideally want the terms involving $\tau$ to be of order $\mathcal O(1/K)$.
\begin{equation}
1/a^\tau = 1/K \Longleftrightarrow a^\tau = K \Longleftrightarrow \tau \log a = \log K \Longleftrightarrow \tau = \frac{\log K}{\log a}
\end{equation}
Choosing $a=1/\gamma L$ it thus suffice to pick $\tau = \frac{\log K}{\log (\nicefrac{1}{\gamma L})}$ in order to have $\gamma^\tau L^\tau = 1/K$.
Plugging into the average iterate result of \Cref{thm:KM:best} yields the claim in \Cref{cor:inexactResolvent:stoc:best}.

Additionally, with the choice $\sigma_k^2=\nicefrac{\sigma_0^2}{k^3}$,
\begin{equation}\label{eq:lastiter:errors}
\begin{split}
\frac{\sum_{k=0}^{K-1}\varepsilon_k(z^\star)}{\lambda(1-\lambda) K} 
  &\leq %
      \max \big\{
        \frac{4\gamma^{\tau} L^{\tau} D^2}{1-\lambda}, 
        \tfrac{2\gamma D\sigma_0}{(1-\gamma L)(1-\lambda)K^{3/2}}
      \big\} 
      + \frac{4\lambda \gamma^{2\tau} L^{2\tau} D^2}{{1-\lambda}} 
      + \tfrac{\lambda\gamma^2 \sigma_0^2}{(1-\gamma L)^2(1-\lambda)K^3}
    \\
\frac 1K \sum_{k=0}^{K-1} \sum_{j=k}^{K-1} \delta_j(z^\star) 
  &\leq %
      \max \{K16\gamma^{\tau} L^{\tau} D^2, \tfrac{8\gamma D \sigma_0}{(1-\gamma L)\sqrt{K}}\}
    .
\end{split}
\end{equation}
In order for the terms involving $\tau$ to be of order $\mathcal O(1/K)$ we need $\tau$ to be slightly larger.
\begin{equation}
K/a^\tau = 1/K \Longleftrightarrow a^\tau = K^2 \Longleftrightarrow \tau \log a = \log K^2 \Longleftrightarrow \tau = \frac{\log K^2}{\log a}
\end{equation}
Choosing $a=1/\gamma L$ it thus suffice to pick $\tau = \frac{\log K^2}{\log (\nicefrac{1}{\gamma L})}$ in order to have $K\gamma^\tau L^\tau = 1/K$.
Plugging \eqref{eq:lastiter:errors} into the last iterate result of \Cref{thm:inexactResolvent:last} completes the proof.
\end{proof}
\end{toappendix}
\begin{correp}[Explicit inexact resolvent]\label{cor:inexactResolvent}
Suppose \Cref{ass:S} holds.
Consider the sequence $(z^k)_{k\in \mathbb N}$ generated by \RAPP with deterministic feedback %
 and $\rho > -\frac{\gamma}{2}$.
Then, for all $z^\star \in \zer S$ with $D := \sup_{j \in \mathbb N}\|z^j - z^\star\| < \infty$,
\begin{cornum}
\item \label{cor:inexactResolvent:best}
  with $\tau = \frac{\log K}{\log (\nicefrac{1}{\gamma L})}$:
  $\frac{1}{K}\sum_{i=0}^{K-1}\|J_{\gamma S}(z^k) - z^{k}\|^2
  = 
    \mathcal O\Big( 
    \frac{\|z^0 - z^\star \|^2}{\lambda(1-\lambda) K} 
              +  \tfrac{D^2}{(1-\lambda)K}
     \Big). 
  $
\item \label{cor:inexactResolvent:last}
  with $\tau = \frac{\log K^2}{\log (\nicefrac{1}{\gamma L})}$: 
  $\|J_{\gamma S}(z^K) - z^{K}\|^2
    = 
      \mathcal O\Big( 
      \frac{\|z^0 - z^\star \|^2}{\lambda(1-\lambda) K}
          + \tfrac{D^2}{K}
          + \tfrac{D^2}{(1-\lambda)K^2}, 
      \Big).
      $
\end{cornum}
\end{correp}
\begin{remark}
\Cref{cor:inexactResolvent:last} implies an oracle complexity of ${\mathcal O}\big(\log(\varepsilon^{-2})\varepsilon^{-1}\big)$ for ensuring that the last iterate satisfies $\|J_{\gamma S}(z^K) - z^K\|^2\leq \varepsilon$.
A stochastic extension is provided in \Cref{cor:inexactResolvent:stoc} by taking the batch size increasing.
Notice that \RAPP, for $\tau=2$ inner steps, reduces to \ref{eq:EG+} in the unconstrained case where $A\equiv 0$.
\end{remark}
\begin{appendixproof}
The claim follows directly from \Cref{cor:inexactResolvent:stoc} as a special case with $\sigma_0 = 0$.
\end{appendixproof}

\begin{toappendix}
\section{Proofs for \Cref{sec:lookahead} (Analysis of Lookahead)}
\end{toappendix}
    \section{Analysis of Lookahead}\label{sec:lookahead}
    \nosectionappendix
    
\begin{figure*}[t]
\centering
\includegraphics[width=0.5\textwidth]{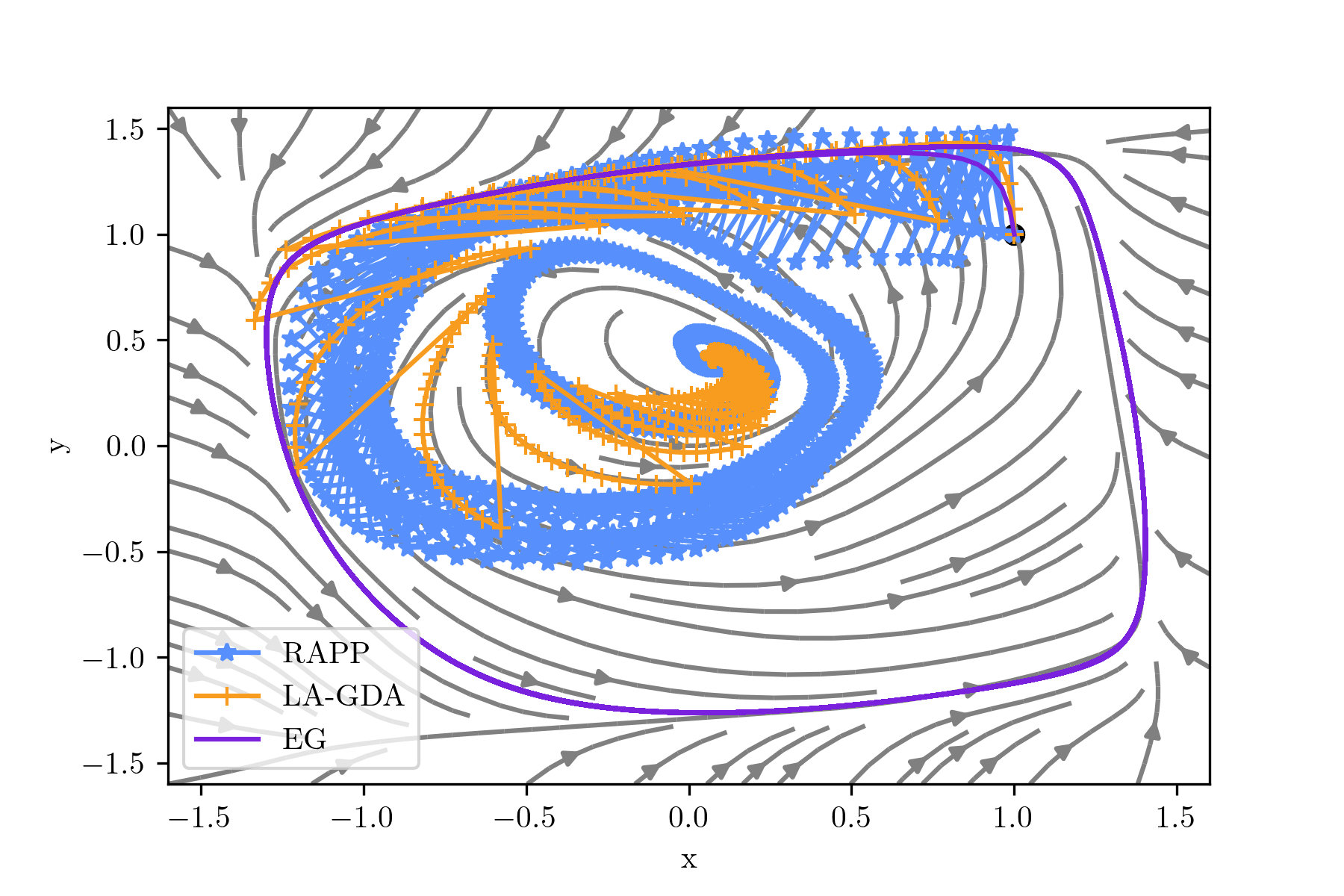}%
\includegraphics[width=0.5\textwidth]{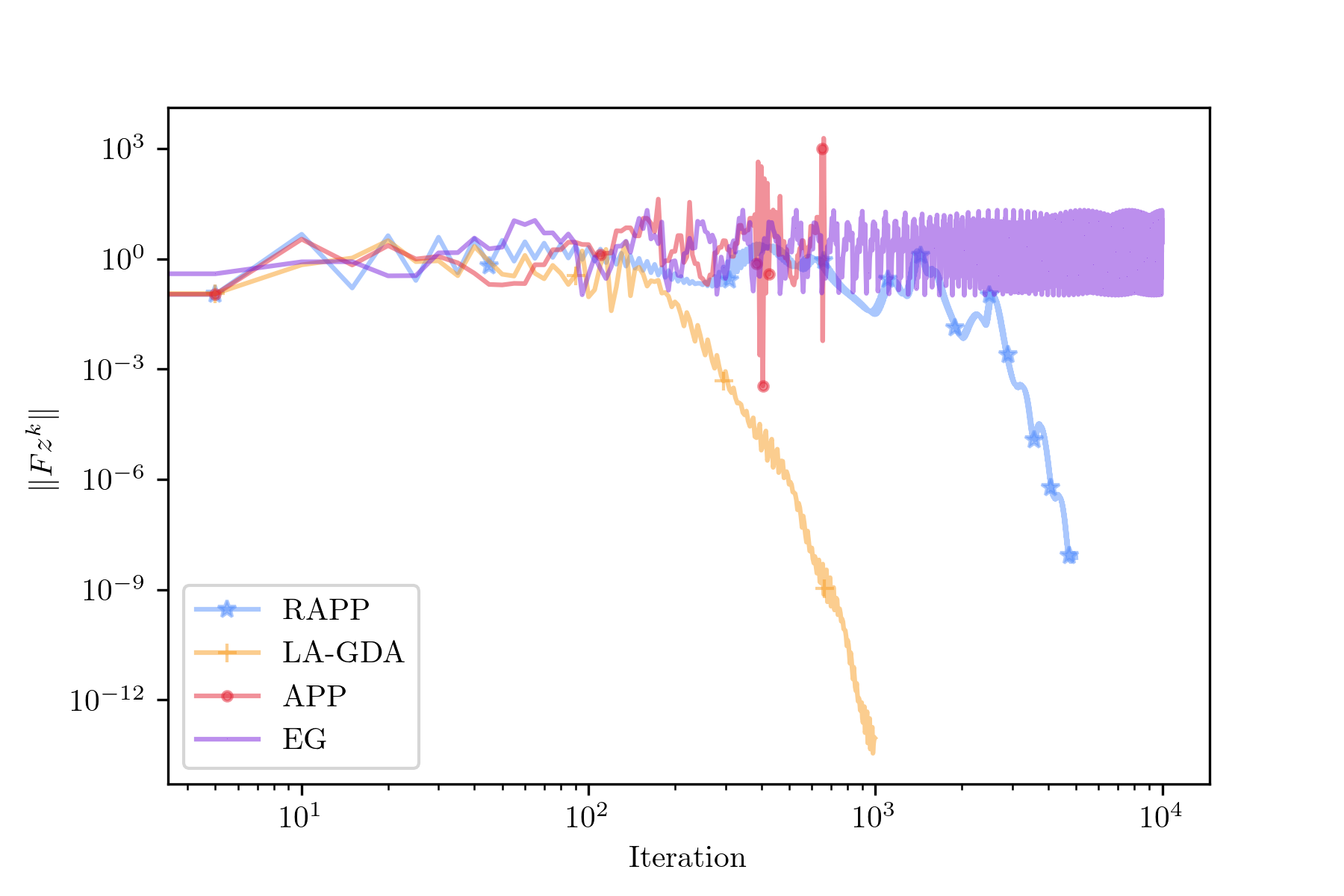}%
\caption{\ref{eq:lookahead} and \RAPP can converge for \citet[Ex. 5.2]{hsieh2021limits}. Interestingly, we can set the stepsize $\gamma$ larger than $\nicefrac 1L$ while \RAPP remains stable. Approximate proximal point (APP) with the same stepsize diverges (the iterates of APP are deferred to \Cref{fig:APPM:iterates}).
In this example, it is apparent from the rates, that there is a benefit in replacing the conservative inner update in \RAPP with GDA in \ref{eq:lookahead} as explored in \Cref{sec:lookahead}.}
\label{fig:forsaken}
\end{figure*}

The update in \RAPP leads to a fairly conservative update in the inner loop, since it corresponds to optimizing a highly regularized subproblem as noted in \Cref{sec:ops:inter}.
Could we instead replace the optimization procedure with gradient descent ascent (GDA)?
If we replace the inner optimization routine we recover what is known as the Lookahead (LA) algorithm
\begin{equation}
\label{eq:lookahead}
\tag{LA-GDA}
\begin{split}
w_k^0 &= z^{k} \\
w_k^{t+1} &= w_k^{t} - \gamma F w_k^{t} \quad \forall t = 0,...,\tau-1 \\
z^{k+1} &= (1-\lambda) z^{k} + \lambda w_k^{\tau}
\end{split}
\end{equation}
We empirically demonstrate that this scheme can converge for nonmonotone problems for certain choices of parameters (see \Cref{fig:forsaken}).
However, what global guarantees can we provide theoretically?

It turns out that for \ref{eq:lookahead} with two inner steps ($\tau=2$) we have an affirmative answer.
After some algebraic manipulation it is not difficult to see that the update can be simplified as follows
\begin{equation}\label{eq:LA:k2:subs}
\begin{split}
z^{k+1} & = \tfrac 12(z^{k} - 2\lambda\gamma F z^k) + \tfrac 12(z^k - 2\lambda\gamma F(z^{k} - \gamma F z^k)).
\end{split}
\end{equation}
This is the average of GDA and EG+ (when $\lambda \in (0,\nicefrac 12)$).
This observation allows us to show convergence under cohypomonotonicity.
This positive result for nonmonotone problems partially explains the stabilizing effect of \ref{eq:lookahead}.
\begin{thmbox}
\begin{thmrep}\label{thm:LA:k2}
Suppose \Cref{ass:S} holds. 
Consider the sequence $(z^k)_{k\in \mathbb N}$ generated by \ref{eq:lookahead} with $\tau=2$, $\gamma \leq \nicefrac 1L$ and $\lambda \in (0,\nicefrac{1}{2})$.
Furthermore, suppose that
\begin{equation}\label{thm:eq:LA:k2:conditions}
2\rho > -(1-2\lambda)\gamma
\quad \text{and} \quad
2\rho \geq 2\lambda\gamma-(1-\gamma^2L^2)\gamma.
\end{equation}
Then, for all $z^\star \in \zer F$,
\begin{equation}\label{eq:LA:k2:rate}
\frac{1}{K} \sum_{k=0}^{K-1}\|F\bar z^k\|^2  \leq 
\frac{\|z^0 - z^\star\|^2}{\lambda\gamma\big((1-2\lambda)\gamma + 2\rho\big)K}.
\end{equation}
\end{thmrep}
\end{thmbox}
\begin{remark}
For $\lambda \rightarrow 0$ and $\gamma=\nicefrac cL$ where $c \in (0,\infty)$, sufficient condition reduces to $\rho \geq -\gamma(1-\gamma^2L^2)/2 = -\nicefrac{c(1-c^2)}{2L}$, of which the minimum is attained with $c=\nicefrac{1}{\sqrt{3}}$, leading to the requirement $\rho \geq -\nicefrac{1}{3 \sqrt{3}L}$.
A similar statement is possible for $z^k$.
Thus, \eqref{eq:lookahead} improves on the range of $\rho$ compared with EG (see \Cref{tbl:overview}).
\end{remark}
\begin{appendixproof}
For $\tau=2$ we can write \eqref{eq:lookahead} as
\begin{equation}
\begin{split}
z^{k+1/3} &= z^{k} - \gamma F z^k \\
z^{k+2/3} &= z^{k+1/3} - \gamma F z^{k+1/3}\\
z^{k+1} &= (1-\lambda) z^{k} + \lambda z^{k+2/3}
\end{split}
\end{equation}
The proof relies on the simplified form of the update rule \eqref{eq:LA:k2:subs}, which can be obtain as follows
\begin{align*}
z^{k+1} &= (1-\lambda) z^{k} + \lambda z^{k+2/3} \\
    & = (1-\lambda) z^{k} + \lambda (z^{k+1/3} - \gamma F z^{k+1/3}) \\
    & = (1-\lambda) z^{k} + \lambda (z^{k} - \gamma F z^k - \gamma F z^{k+1/3}) \\
    & = z^{k} - \lambda\gamma F z^k - \lambda\gamma F(z^{k} - \gamma F z^k) \\
    & = \tfrac 12(z^{k} - 2\lambda\gamma F z^k) + \tfrac 12(z^k - 2\lambda\gamma F(z^{k} - \gamma F z^k)).
    \numberthis \label{eq:LA:k2:update}
\end{align*}
Define the following operators with $\beta = 2\lambda$
\begin{subequations}
\begin{align}
\operatorname{EG^+}(z) &= z - \beta\gamma F (z - \gamma Fz) \label{eq:EG+:operator}\\
\operatorname{GDA}(z) &= z - \beta\gamma Fz \label{eq:GDA:operator}
\end{align}
\end{subequations}
Then, using \eqref{eq:LA:k2:update}, \ref{eq:lookahead} with $\tau=2$ can be written as
\begin{equation}
z^{k+1} = \tfrac{1}{2}\operatorname{GDA}(z^k) + \tfrac{1}{2}\operatorname{EG^+}(z^k)
\end{equation}
One step of the update can be bounded as
\begin{equation}\label{eq:LA:k2:one}
\|z^{k+1} - z^\star \|^2 
= \|\tfrac{1}{2}\operatorname{GDA}(z^k) + \tfrac{1}{2}\operatorname{EG^+}(z^k) - z^\star \|^2 
\leq \tfrac{1}{2}\|\operatorname{GDA}(z^k) - z^\star \|^2 + \tfrac{1}{2}\|\operatorname{EG^+}(z^k) - z^\star \|^2,
\end{equation}
where we have used Young's inequality.
The first term can be expanded
\begin{equation}\label{eq:LA:k2:GDA}
\|\operatorname{GDA}(z^k) - z^\star \|^2 
= \|z^k - z^\star \|^2 + \beta^2\gamma^2\|Fz^k \|^2 - 2\beta\gamma \langle Fz^k, z^k - z^\star\rangle
\end{equation}
For the second term of \eqref{eq:LA:k2:one} we will need to bound the following inner product
\begin{align*}
\langle \gamma F\bar z^k, z^k - \bar z^k\rangle 
  &= \tfrac{\gamma^2}{2} \|F\bar z^k\|^2 - \tfrac{1}{2} \|\gamma F\bar z^k - (z^k - \bar z^k)\|^2 + \tfrac{1}{2} \|\bar z^k - z^k\|^2 \\
   \dueto{\eqref{eq:EG+:operator}}&= \tfrac{\gamma^2}{2} \|F\bar z^k\|^2 - \tfrac{\gamma^2}{2} \|F\bar z^k - Fz^k\|^2 + \tfrac{1}{2} \|\bar z^k - z^k\|^2 \\
   \dueto{(\Cref{ass:F:Lips})}&\geq \tfrac{\gamma^2}{2} \|F\bar z^k\|^2 + \tfrac{1}{2}(1 - \gamma^2L^2) \|\bar z^k - z^k\|^2
   \numberthis\label{eq:LA:k2:EG+:inner}.
\end{align*}
Consequently,
\begin{align*}
  \gamma\langle F\bar z^k, z^k - z^\star \rangle
      {}={}&
  \gamma\langle F\bar z^k, \bar z^k - z^\star \rangle
      {}+{}
  \gamma\langle F\bar z^k, z^k - \bar z^k \rangle
  \\
      \dueto{\eqref{eq:LA:k2:EG+:inner}}\leq{}&
  \gamma\langle F\bar z^k, \bar z^k - z^\star \rangle
  -\tfrac{\gamma^2}{2}\|F\bar z^k\|^2 
  - \tfrac{1}{2}(1-\gamma^2L^2) \|\bar z^k-z^k\|^2.
  \numberthis\label{eq:LA:k2:EG+:inner2}
\end{align*}
Finally,
\begin{align*}
    \|\operatorname{EG^+}(z^k)- z^\star\|^2 
        {}={}&
    \|z^{k}- z^\star\|^2 
        {}+{}
    \beta^2\gamma^2\|F\bar z^k\|^2
        {}-{}
    2\beta\gamma\langle F\bar z^k, z^k - z^\star \rangle
    \\
    \dueto{\eqref{eq:LA:k2:EG+:inner2}}\leq{}&
    \|z^{k}- z^\star\|^2 
    - \beta(1-\beta)\gamma^2\|F\bar z^k\|^2
    - \beta(1-\gamma^2L^2) \|\bar z^k-z^k\|^2
    -2\beta\gamma\langle F\bar z^k, \bar z^k - z^\star \rangle \\
\dueto{\eqref{eq:EG+:operator}}{}={}&
    \|z^{k}- z^\star\|^2 
    - \beta(1-\beta)\gamma^2\|F\bar z^k\|^2
    - \beta(1-\gamma^2L^2)\gamma^2 \|Fz^k\|^2
    -2\beta\gamma\langle F\bar z^k, \bar z^k - z^\star \rangle
    \numberthis\label{eq:LA:k2:EG+}
\end{align*}
Using \eqref{eq:LA:k2:GDA} and \eqref{eq:LA:k2:EG+} in \eqref{eq:LA:k2:one}, we have
\begin{align*}
2\|z^{k+1} - z^\star \|^2  
  &\leq 2\|z^{k} - z^\star \|^2 
    + \beta^2\gamma^2\|Fz^k \|^2 - 2\beta\gamma \langle Fz^k, z^k - z^\star\rangle \\
    &\quad - \beta(1-\beta)\gamma^2\|F\bar z^k\|^2
    - \beta(1-\gamma^2L^2)\gamma^2 \|Fz^k\|^2
    -2\beta\gamma\langle F\bar z^k, \bar z^k - z^\star \rangle \\
  \dueto{(\Cref{ass:F:cohypo})}&\leq 2\|z^{k} - z^\star \|^2 
    - \beta\gamma\big((1-\gamma^2L^2)\gamma + 2\rho - \beta\gamma\big)\|Fz^k \|^2 \\
    &\quad - \beta\gamma\big((1-\beta)\gamma + 2\rho\big)\|F\bar z^k\|^2 
  \numberthis\label{eq:LA:k2:descent}
\end{align*}
To get a recursion it thus suffice to require
\begin{equation}\label{eq:LA:k2:conditions}
(1-\beta)\gamma + 2\rho > 0 
\quad \text{and} \quad
(1-\gamma^2L^2)\gamma + 2\rho - \beta\gamma \geq 0.
\end{equation}
Rearranging and telescoping \eqref{eq:LA:k2:descent} achieves the claimed rate.
Rearranging \eqref{eq:LA:k2:conditions} completes the proof.
\end{appendixproof}
For larger $\tau$, \ref{eq:lookahead} does not necessarily converge (see \Cref{fig:LA:F} for a counterexample).
We next ask what we would require of the base optimizer to guarantee convergence for any $\tau$.
To this end, we replace the inner iteration with some abstract algorithm $\operatorname{Alg}: \R^d \rightarrow \R^d$, i.e.
\begin{equation}
\label{eq:LA}
\tag{LA}
\begin{split}
w_k^0 &= z^{k} \\
w_k^{t+1} &= \operatorname{Alg}(w_k^{t}) \quad \forall t = 0,...,\tau-1 \\
z^{k+1} &= (1-\lambda) z^{k} + \lambda w_k^{\tau}
\end{split}
\end{equation}
Convergence follows from quasi-nonexpansiveness.
\begin{thmbox}
\begin{thmrep}\label{thm:LA:nonexpansive}
Suppose $\operatorname{Alg}: \R^d \rightarrow \R^d$ is quasi-nonexpansive.
Then $(z^{k})_{k \in \mathbb N}$ generated by \eqref{eq:LA} converges to some $z^\star \in \fix \operatorname{Alg}$.
\end{thmrep}
\end{thmbox}
\begin{remark}\label{rem:LA:outer}
Even though the base optimizer $\operatorname{Alg}$ might not converge (since nonexpansiveness is not sufficient), \Cref{thm:LA:nonexpansive} shows that the outer loop converges.
Interestingly, this aligns with the benefit observed in practice of using the outer iteration of Lookahead (see \Cref{fig:adam}).
\end{remark}
\begin{appendixproof}
By the composition rule \citep[Prop. 4.49(ii)]{Bauschke2017Convex} $\operatorname{Alg}^t$ is also nonexpansive.
Since $(z^{k})_{k \in \mathbb N}$ can be seen as a Krasnosel'skiĭ-Mann iteration of a quasi-nonexpansive operator the iterates converges to $z^\star \in \fix \operatorname{Alg}^t$ by \Cref{thm:KM:best} with $\varepsilon_k=0$, i.e. $\|z^k - z^\star\| \overset{k \rightarrow \infty}{\longrightarrow} 0$.
By \citet[Prop. 4.49(i)]{Bauschke2017Convex} it also follows that $\fix \operatorname{Alg}^t = \fix \operatorname{Alg}$, which completes the proof.
\end{appendixproof}

\begin{figure*}[t]
\centering
\includegraphics[width=0.5\textwidth]{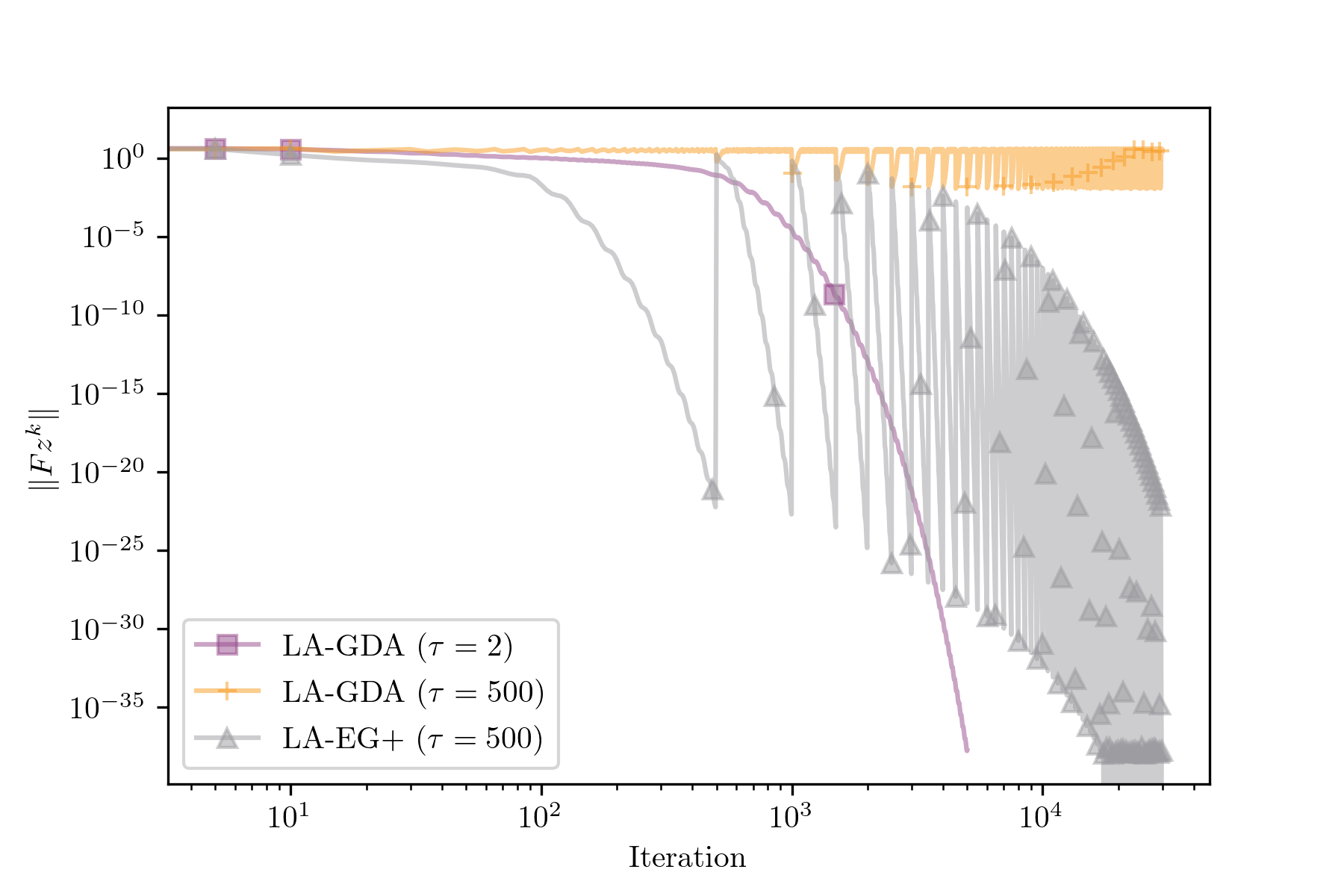}%
\includegraphics[width=0.5\textwidth]{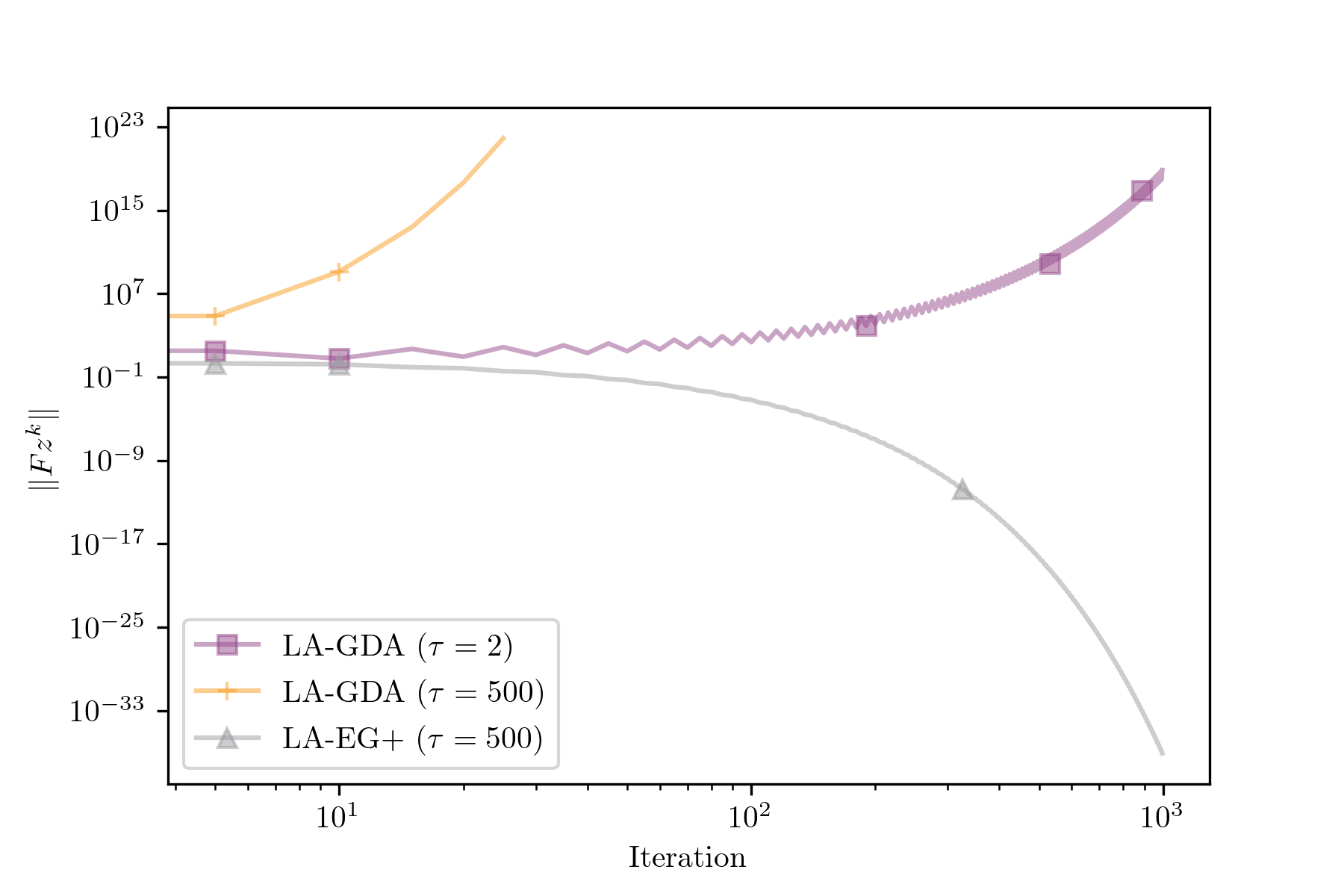}%
\caption{We test the Lookahead variants on 
\citet[Ex. 3(iii)]{pethick2022escaping} where $\rho \in (\nicefrac{-1}{8L},\nicefrac{-1}{10L})$  (left)
and \citet[Ex. 5]{pethick2022escaping} with $\rho = \nicefrac{-1}{3}$ (right).
For the left example \ref{eq:lookahead} (provably) converges for $\tau=2$, but may be nonconvergent for larger $\tau$ as illustrate.
Both variants of \ref{eq:lookahead} diverges in the more difficult example on the right, while \ref{eq:LA-CEG+} in contrast provably converges.
It seems that \ref{eq:LA-CEG+} trades off a constant slowdown in the rate for convergence in a larger class.
}
\label{fig:LA:F}
\end{figure*}

\paragraph{Cocoercive}
From \Cref{thm:LA:nonexpansive} we almost immediately get converge of \ref{eq:lookahead} for coercive problems since $V=\id - \gamma F$ is nonexpansive iff $\gamma F$ is $\nicefrac 12$-cocoercive.
\begin{thmbox}
\begin{correp}\label{thm:LA:cocoercive}
Suppose $F$ is $\nicefrac 1L$-cocoercive.
Then $(z^{k})_{k \in \mathbb N}$ generated by \ref{eq:lookahead} with $\gamma \leq \nicefrac 2L$ converges to some $z^\star \in \zer F$.
\end{correp}
\end{thmbox}
\begin{appendixproof}
If $F$ is $\nicefrac{1}{L}$-cocoercive then $\gamma F$ is $\nicefrac 12$-cocoercive given $\gamma \leq \nicefrac{2}{L}$, which in turn implies that $V=\id - \gamma F$ is nonexpansive.
The claim follows from \Cref{thm:LA:nonexpansive} and by observing that $\fix V = \zer F$.
\end{appendixproof}

\begin{remark}
\Cref{thm:LA:cocoercive} can trivially be extended to the constrained case by observing that also $V=(\id + \gamma A)^{-1}(\id - \gamma F)$ is nonexpansive when $A$ is maximally monotone.
As a special case this captures constrained convex and gradient Lipschitz minimization problems.
\end{remark}

\paragraph{Monotone}
When only monotonicity and Lipschitz holds we may instead consider the following extragradient based version of Lookahead (first empirically investigated in \citet{chavdarova2020taming})
\begin{equation}
\label{eq:LA-EG}
\tag{LA-EG}
\begin{split}
w_k^0 &= z^{k} \\
w_k^{t+1} &= \operatorname{EG}(w^k_t) \quad \forall t = 0,...,\tau-1 \\
z^{k+1} &= (1-\lambda) z^{k} + \lambda w_k^{\tau}
\end{split}
\end{equation}
where $\operatorname{EG}(z) = z - \gamma F(z - \gamma F z)$.
We show in \Cref{thm:FBF} that the $\operatorname{EG}$-operator of the inner loop is quasi-nonexpansive, which implies convergence of \ref{eq:LA-EG} through \Cref{thm:LA:nonexpansive}.
\Cref{thm:FBF} extends even to cases where $A \not \equiv 0$ by utilizing the forward-backward-forward construction of \citet{tseng1991applications}.
This providing the first global convergence guarantee for Lookahead beyond bilinear games.

\begin{toappendix}
Consider the forward-backward-forward (FBF) method of \cite{tseng1991applications}. 
We can write one step as follows
\begin{subequations}\label{eq:FBF}
\begin{align}
\z &= (\id + \gamma A)^{-1}\HC[][z] \label{eq:FBF:zbar} \\
\operatorname{FBF}(z) &= z - \left(\HC[][z] - \HC[][\z]\right) \label{eq:FBF:z}
\end{align}
\end{subequations}
where $\HC[][] = \id - \gamma F$. The extragradient method is obtained as a special case when $A\equiv 0$.

\begin{thm}\label{thm:FBF}
If $A\colon \R^d\rightrightarrows\R^d$ is maximally monotone and $F\colon \R^d\rightarrow\R^d$ is monotone and $L$-Lipschitz continuous then the operator \eqref{eq:FBF} with $\gamma \leq \nicefrac 1L$ is quasi-nonexpansive.
Furthermore, $\fix \operatorname{FBF} = \zer S$ with $S:=A+F$.
\end{thm}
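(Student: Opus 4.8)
The plan is to follow the classical forward–backward–forward estimate of \cite{tseng1991applications}, specialized using the $\nicefrac12$-cocoercivity of $H = \id - \gamma F$ recorded in \Cref{lm:Main:H:properties}. Writing $\bar z = J_{\gamma A}(Hz)$ as in \eqref{eq:FBF:zbar}, I would first unfold \eqref{eq:FBF:z} into the equivalent form $\operatorname{FBF}(z) = \bar z + \gamma(Fz - F\bar z)$, which cleanly separates the resolvent step from a Lipschitz-controlled correction and is the representation I will expand below.

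For the fixed-point characterization I would begin with the easy inclusion $\zer S \subseteq \fix\operatorname{FBF}$: if $z^\star \in \zer S$ then $-Fz^\star \in Az^\star$, so $Hz^\star = z^\star - \gamma Fz^\star \in (\id + \gamma A)z^\star$ and hence $J_{\gamma A}(Hz^\star) = z^\star$; substituting into \eqref{eq:FBF:z} gives $\operatorname{FBF}(z^\star) = z^\star$. For the converse, a fixed point satisfies $z - \bar z = \gamma(Fz - F\bar z)$, i.e. $Hz = H\bar z$; combined with the defining resolvent inclusion $\tfrac1\gamma(Hz - \bar z) \in A\bar z$ this collapses to $-F\bar z \in A\bar z$, so $\bar z \in \zer S$, and the Lipschitz bound $\norm{z - \bar z} = \gamma\norm{Fz - F\bar z} \leq \gamma L\,\norm{z - \bar z}$ with $\gamma \leq \nicefrac1L$ forces $z = \bar z$.

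The heart of the proof is quasi-nonexpansiveness. Fix $z^\star \in \zer S$, so $\bar z^\star = z^\star$ by the forward direction. The resolvent inclusion $\tfrac1\gamma(Hz - \bar z) \in A\bar z$ together with $-Fz^\star \in Az^\star$ and monotonicity of $A$ yields $\braket{(z - \bar z) - \gamma(Fz - Fz^\star), \bar z - z^\star} \geq 0$. I would then expand $\norm{\operatorname{FBF}(z) - z^\star}^2 = \norm{\bar z - z^\star}^2 + 2\gamma\braket{\bar z - z^\star, Fz - F\bar z} + \gamma^2\norm{Fz - F\bar z}^2$, substitute the cosine-rule identity $\norm{\bar z - z^\star}^2 = \norm{z - z^\star}^2 - \norm{z - \bar z}^2 - 2\braket{z - \bar z, \bar z - z^\star}$, and insert the $A$-monotonicity inequality above. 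The decisive cancellation is that the two surviving inner products combine into $-2\gamma\braket{F\bar z - Fz^\star, \bar z - z^\star} \leq 0$ by monotonicity of $F$. What remains is $\norm{\operatorname{FBF}(z) - z^\star}^2 \leq \norm{z - z^\star}^2 - (1 - \gamma^2 L^2)\norm{z - \bar z}^2$, and $\gamma L \leq 1$ renders the last term nonpositive, giving the claim.

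The main obstacle — and the only genuinely FBF-specific step — is arranging the algebra so that the $A$-monotonicity term coming from the resolvent and the explicit $F$-correction term assemble into a single $F$-monotonicity expression. The deliberate addition and subtraction of $Fz$ inside the inner products is precisely what makes the cross terms telescope; everything else is bookkeeping with the cosine rule and the Lipschitz bound $\gamma^2\norm{Fz - F\bar z}^2 \leq \gamma^2 L^2\norm{z - \bar z}^2$.
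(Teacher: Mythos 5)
Your proof is correct and takes essentially the same route as the paper: both arrive at Tseng's descent inequality $\|\operatorname{FBF}(z)-z^\star\|^2 \le \|z-z^\star\|^2 - (1-\gamma^2L^2)\|z-\bar z\|^2$, the paper by routing the algebra through the $\nicefrac{1}{2}$-cocoercivity of $H=\id-\gamma F$ (\Cref{lm:Main:H:properties}) together with monotonicity of the combined operator $S$, and you by expanding with the cosine rule and invoking monotonicity of $A$ and $F$ separately --- the two bookkeepings are interchangeable. One genuine addition on your side is that you actually prove $\fix\operatorname{FBF}=\zer S$, which the paper's proof asserts without argument; note only that your converse inclusion relies on $\|z-\bar z\|\le\gamma L\|z-\bar z\|$ forcing $z=\bar z$, which needs $\gamma L<1$ strictly --- at $\gamma=1/L$ the identity can genuinely fail (e.g.\ $F=L\,\id$, $A\equiv 0$ gives $\operatorname{FBF}=\id$ with $\fix\operatorname{FBF}=\R^d\neq\{0\}=\zer S$), so that boundary case should be excluded or handled by observing that your argument still produces $\bar z\in\zer S$.
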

\begin{proof}
By \(\nicefrac12\)-cocoercivity from \Cref{lm:Main:H:properties} we obtain
  \begin{align*}
    \langle \HC[][\z] - \HC[][z], z - z^\star \rangle
        {}={}&
    \langle \HC[][\z] - \HC[][z], \z - z^\star \rangle
        {}+{}
    \langle \HC[][\z] - \HC[][z], z - \z \rangle
    \\
    \dueto{(\Cref{lm:Main:H:properties})}\leq{}&
      \langle \HC[][\z] - \HC[][z], \z - z^\star \rangle
      -\tfrac12\|\HC[][\z]-\HC[][z]\|^2 
      - \tfrac12 (1-\gamma^2L^2) \|\bar z-z\|^2 
    \\
    \dueto{(monotone)}\leq{}&
      -\tfrac12\|\HC[][\z]-\HC[][z]\|^2 
      - \tfrac12 (1-\gamma^2L^2) \|\bar z-z\|^2
      \numberthis\label{eq:FBF:inner2}
\end{align*}
The operator in \eqref{eq:FBF:z} satisfies
\begin{align*}
    \|\operatorname{FBF}(z) - z^\star\|^2 
        {}={}&
    \|z- z^\star\|^2 
        {}+{}
    \|\HC[][\z] - \HC[][z]\|^2
        {}+{}
    2\langle \HC[][\z] - \HC[][z], z - z^\star \rangle
    \\
    \dueto{\eqref{eq:FBF:inner2}}
        {}\leq{}&
    \|z- z^\star\|^2 %
    - (1-\gamma^2L^2) \|\bar z-z\|^2 
\end{align*}
where the last term is negative due to $\gamma \leq \nicefrac 1L$.
Recognizing the definition of quasi-nonexpansive completes the proof.
\end{proof}

\end{toappendix}

\paragraph{Cohypomonotone}
For cohypomonotone problems large $\tau$ may prevent \ref{eq:lookahead} from converging (see \Cref{fig:LA:F} for a counterexample).
Therefore we propose replacing the inner optimization loop in \ref{eq:lookahead} with the method proposed in \citep[Alg. 1]{pethick2022escaping}.
Let $H=\id - \gamma F$.
We can write one step of the inner update with $\alpha\in (0,1)$ as
\begin{equation}
\label{eq:CEG+}
\begin{split}
\operatorname{CEG^+}(w) =w+2\alpha(H \bar{w}-H w)
\quad \text{with} \quad 
\bar{w} &=\left(\mathrm{id}+\gamma A\right)^{-1}Hw.
\end{split}   
\end{equation}
The usefulness of the operator $\operatorname{CEG^+}: \R^d \rightarrow \R^d$ comes from the fact that it is quasi-nonexpansive under \Cref{ass:S} (see \Cref{thm:EG+}).
Thus, \Cref{thm:LA:nonexpansive} applies even when $F$ is only cohypomonotone if we make the following modification to \ref{eq:lookahead}
\begin{equation}
\tag{LA-CEG+}
\label{eq:LA-CEG+}
\begin{split}
w_k^0 &= z^{k} \\
w_k^{t+1} &= \operatorname{CEG^+}(w_k^{t}) \quad \forall t = 0,...,\tau-1 \\
z^{k+1} &= (1-\lambda) z^{k} + \lambda w_k^{\tau}
\end{split}
\end{equation}
In the unconstrained case ($A\equiv 0$) this reduces to using the \ref{eq:EG+} algorithm of \citet{diakonikolas2021efficient} for the inner loop.
We have the following convergence guarantee.
\begin{thmbox}
\begin{correp}\label{thm:LA:CEG+}
Suppose \Cref{ass:S} holds.
Then $(z^{k})_{k \in \mathbb N}$ generated by \ref{eq:LA-CEG+} with $\lambda \in (0,1)$, $\gamma \in (\lfloor-2\rho\rfloor_+,\nicefrac 1L)$ and $\alpha \in (0, 1 + \tfrac{2\rho}{\gamma})$ converges to some $z^\star \in \zer S$.
\end{correp}
\end{thmbox}
\begin{appendixproof}
Quasi-nonexpansiveness of the operator $\operatorname{CEG^+}: \R^d \rightarrow \R^d$ follows from \Cref{thm:EG+:quasi-nonexpansive} provided $\alpha \in (0, 1 + \tfrac{2\rho}{\gamma})$ so \Cref{thm:LA:nonexpansive} applies.

It remains to verify that $\fix \operatorname{CEG^+} = \zer S$. 
This follows from
\begin{equation}
\tfrac{1}{\gamma} (\HC[][z] - \HC[][\z]) \in A(\z) + F(\z) = S(\z),
\end{equation}
and noticing that the stepsizes are positive, i.e. $\alpha > 0$ and $\gamma > 0$, which completes the proof.
\end{appendixproof}

\begin{toappendix}
\section{Analysis of CEG+}\label{app:EG+}
This section provides a simplified convergence proof of the CEG+ scheme proposed in \citet[Cor. 3.2]{pethick2022escaping} without going through adaptivity and a projected interpretation. 
We additionally provide convergence in terms of the residual $\|z^k-\bar z^k\|$.
The algorithm can be described with the following recursion
\begin{subequations}\label{eq:Main:AFBA}
\begin{align}
\z^{k} &= (\id + \gamma A)^{-1}(\HC[][z^k]) \label{eq:Main:AFBA:zbar} \\
z^{k+1} &= z^k - \alpha \left(\HC[][z^k] - \HC[][\z^k]\right) \label{eq:Main:AFBA:z}
\end{align}
\end{subequations}
where $\HC[][] = \id - \gamma F$. The \ref{eq:EG+} algorithm is obtained as a special case when $A\equiv 0$.

\begin{thm}\label{thm:EG+}
Suppose \Cref{ass:S} and $\gamma \in (\lfloor-2\rho\rfloor_+,\nicefrac 1L]$. 
Consider the sequence $(z^k)_{k\in \mathbb N}$ generated by \eqref{eq:Main:AFBA}.
Then, for all $z^\star \in \zer S$, it follows that
\begin{thmnum}
  \item the iterates $(z^k)_{k\in \mathbb N}$ satisfies \label{thm:EG+:quasi-nonexpansive}
   \begin{align*}
    \|z^{k+1}- z^\star\|^2 
        {}\leq{}&
    \|z^{k}- z^\star\|^2 
    - \alpha(1 + \tfrac{2\rho}{\gamma} - \alpha)\|\HC[][\z^k] - \HC[][z^k]\|^2
    - \alpha(1-\gamma^2L^2) \|\bar z^k-z^k\|^2,
    \end{align*}
    and in particular, $\operatorname{CEG^+}: \R^d \rightarrow \R^d$ in \eqref{eq:CEG+} is quasi-nonexpansive if $\alpha \in (0, 1 + \tfrac{2\rho}{\gamma})$.
  \item for $\alpha \in (0,1]$ and $\alpha < 1 + \tfrac{2\rho}{\gamma}$
  \begin{equation}
  \frac 1K \sum_{k=0}^{K-1}\|z^k - \bar z^k \|^2 \leq \frac{\|z^0 - z^\star \|^2}{\alpha(1-\gamma^2L^2)K}.
  \end{equation}
  \item for $\alpha \in (0,1)$ and $\alpha < 1 + \tfrac{2\rho}{\gamma}$
  \begin{equation}
  \frac 1K \sum_{k=0}^{K-1}\dist(0,S\z^k)^2 \leq \frac{\|z^0 - z^\star \|^2}{\alpha\gamma^2(1 + \tfrac{2\rho}{\gamma} - \alpha)K}.
  \end{equation}
\end{thmnum}
\end{thm}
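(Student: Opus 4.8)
The plan is to collapse the entire statement into the single descent inequality of part~(i), after which parts~(ii) and~(iii) drop out by telescoping. The pivotal observation is that the resolvent step~\eqref{eq:Main:AFBA:zbar} secretly produces an element of $S\z^k$: since $\z^k = (\id + \gamma A)^{-1}\HC[][z^k]$ we have $\HC[][z^k] - \z^k \in \gamma A\z^k$, and substituting $\z^k = \HC[][\z^k] + \gamma F\z^k$ gives
\[
v^k := \tfrac{1}{\gamma}\big(\HC[][z^k] - \HC[][\z^k]\big) \in A\z^k + F\z^k = S\z^k.
\]
This is exactly the identity already exploited in the proof of \Cref{thm:LA:CEG+}, and it is what lets $\rho$-comonotonicity of $S$ act on the implicit step. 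Note also that the update~\eqref{eq:Main:AFBA:z} then reads simply $z^{k+1} = z^k - \alpha\gamma v^k$.

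For part~(i) I would expand
\[
\|z^{k+1} - z^\star\|^2 = \|z^k - z^\star\|^2 - 2\alpha\gamma\langle v^k, z^k - z^\star\rangle + \alpha^2\gamma^2\|v^k\|^2,
\]
and split $\langle v^k, z^k - z^\star\rangle = \langle v^k, z^k - \z^k\rangle + \langle v^k, \z^k - z^\star\rangle$. The second inner product is bounded below by $\rho\|v^k\|^2$ by applying $\rho$-comonotonicity (\Cref{ass:F:cohypo}) to the pairs $(\z^k,v^k)$ and $(z^\star,0)$, using $0 \in Sz^\star$. The first inner product equals $\tfrac{1}{\gamma}\langle \HC[][z^k] - \HC[][\z^k], z^k - \z^k\rangle$, to which I apply the $\nicefrac12$-cocoercivity of $H$ from \Cref{lm:Main:H:properties}; since $\|\HC[][z^k] - \HC[][\z^k]\|^2 = \gamma^2\|v^k\|^2$, this contributes $\tfrac{\gamma}{2}\|v^k\|^2 + \tfrac{1}{2\gamma}(1-\gamma^2L^2)\|z^k - \z^k\|^2$. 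Substituting both bounds and collecting the coefficient of $\|v^k\|^2$, namely $-\alpha\gamma^2 - 2\alpha\gamma\rho + \alpha^2\gamma^2 = -\alpha\gamma^2(1 + \tfrac{2\rho}{\gamma} - \alpha)$, then rewriting $\gamma^2\|v^k\|^2 = \|\HC[][\z^k] - \HC[][z^k]\|^2$, yields the claimed inequality verbatim. Quasi-nonexpansiveness of $\operatorname{CEG^+}$ is immediate from it: both subtracted terms are nonnegative once $\gamma \leq \nicefrac1L$ (so $1-\gamma^2L^2 \geq 0$) and $\alpha \in (0, 1 + \tfrac{2\rho}{\gamma})$, and $\fix\operatorname{CEG^+} = \zer S$.

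For parts~(ii) and~(iii) I would discard from part~(i) whichever nonpositive term is not of interest and telescope the rest over $k = 0,\dots,K-1$. For (ii), keeping the $\|z^k-\z^k\|^2$ term gives $\alpha(1-\gamma^2L^2)\|z^k - \z^k\|^2 \leq \|z^k - z^\star\|^2 - \|z^{k+1} - z^\star\|^2$ (here $\alpha < 1 + \tfrac{2\rho}{\gamma}$ makes the dropped $\|v^k\|^2$ term nonpositive); summing and dividing by $\alpha(1-\gamma^2L^2)K$ gives the stated average rate. For (iii), keeping instead the $\|\HC[][\z^k] - \HC[][z^k]\|^2 = \gamma^2\|v^k\|^2$ term and using $\dist(0, S\z^k)^2 \leq \|v^k\|^2$ (valid because $v^k \in S\z^k$), telescoping and dividing by $\alpha\gamma^2(1 + \tfrac{2\rho}{\gamma} - \alpha)K$ gives the residual rate.

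I expect the only genuine obstacle to be establishing and correctly orienting the inclusion $v^k \in S\z^k$; once it is in hand the argument is a direct computation in which \Cref{lm:Main:H:properties} and comonotonicity each contribute exactly one term. The single delicate point is the coefficient bookkeeping that produces the factor $1 + \tfrac{2\rho}{\gamma} - \alpha$, where a sign slip is the main risk; the telescoping arguments for (ii) and (iii) are then routine.
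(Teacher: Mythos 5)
Your proposal is correct and follows essentially the same route as the paper's proof: the same inclusion $\tfrac{1}{\gamma}(\HC[][z^k]-\HC[][\z^k])\in S\z^k$, the same split of the inner product with $\nicefrac12$-cocoercivity of $H$ (\Cref{lm:Main:H:properties}) handling $\langle v^k, z^k-\z^k\rangle$ and comonotonicity handling $\langle v^k,\z^k-z^\star\rangle$, and the same telescoping for parts (ii) and (iii). The only cosmetic difference is that you phrase everything in terms of $v^k$ rather than $\HC[][z^k]-\HC[][\z^k]$, and your coefficient bookkeeping $-\alpha\gamma^2(1+\tfrac{2\rho}{\gamma}-\alpha)$ checks out.
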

\begin{proof}
By \(\nicefrac12\)-cocoercivity of $H=\id - \gamma F$ from \Cref{lm:Main:H:properties} we obtain
  \begin{align*}
    \langle \HC[][\z^k] - \HC[][z^k], z^k - z^\star \rangle
        {}={}&
    \langle \HC[][\z^k] - \HC[][z^k], \z^k - z^\star \rangle
        {}+{}
    \langle \HC[][\z^k] - \HC[][z^k], z^k - \z^k \rangle
    \\
        {}\leq{}&
    \langle \HC[][\z^k] - \HC[][z^k], \z^k - z^\star \rangle
    -\tfrac12\|\HC[][\z^k]-\HC[][z^k]\|^2 
    - \tfrac12 (1-\gamma^2L^2) \|\bar z^k-z^k\|^2
    \numberthis\label{eq:Main:AFBA:inner2}
\end{align*}
The update in \eqref{eq:Main:AFBA:z} yields
\begin{align*}
    \|z^{k+1}- z^\star\|^2 
        {}={}&
    \|z^{k}- z^\star\|^2 
        {}+{}
    \alpha^2\|\HC[][\z^k] - \HC[][z^k]\|^2
        {}+{}
    2\alpha\langle \HC[][\z^k] - \HC[][z^k], z^k - z^\star \rangle
    \\
    \dueto{\eqref{eq:Main:AFBA:inner2}}
        {}\leq{}&
    \|z^{k}- z^\star\|^2 
    -2\alpha\langle \HC[][z^k] - \HC[][\z^k], \z^k - z^\star \rangle \\
    &\quad - \alpha(1-\alpha)\|\HC[][\z^k] - \HC[][z^k]\|^2
    - \alpha(1-\gamma^2L^2) \|\bar z^k-z^k\|^2.
     \numberthis\label{eq:Main:AFBA:fejer}
\end{align*}
Noticing that both latter terms are negative. 
Observe that by \eqref{eq:Main:AFBA:zbar} we have
\[
    \tfrac{1}{\gamma} (\HC[][z^k] - \HC[][\z^k]) \in A(\z^k) + F(\z^k) = S(\z^k).
\]
Therefore, by cohypomonotonicity of $S=A+F$,
\begin{equation}\label{eq:Main:AFBA:monotone}
    \tfrac{1}{\gamma}\langle \HC[][z^k] - \HC[][\z^k], \z^k - z^\star \rangle
        {}\geq{}
    \rho\|\HC[][z^k] - \HC[][\z^k]\|^2. 
\end{equation}
and consequently \eqref{eq:Main:AFBA:fejer} leads to Fej\'er monotonicity,
\begin{align*}
    \|z^{k+1}- z^\star\|^2 
        {}\leq{}&
    \|z^{k}- z^\star\|^2 
    - \alpha(1 + \tfrac{2\rho}{\gamma} - \alpha)\|\HC[][\z^k] - \HC[][z^k]\|^2
    - \alpha(1-\gamma^2L^2) \|\bar z^k-z^k\|^2.
\end{align*}
By telescoping we obtain the two claims.
\end{proof}

\end{toappendix}

\begin{toappendix}
\section{Experiments}
\subsection{Simulations}

We repeat the synthetic examples for convenience below.

\begin{example}[PolarGame {\citep[Ex. 3(iii)]{pethick2022escaping}}]
\label{ex:polargame}
Consider $$Fz = \left(\psi(x,y)-y, \psi(y,x)+x\right),$$
where $\|z\|_\infty \leq \nicefrac{11}{10}$
and $\psi(x,y)=\frac{1}{16} a x (-1 + x^2 + y^2) (-9 + 16 x^2 + 16 y^2)$ with $a=\frac{1}{3}$.
\end{example}

\begin{example}[{Quadratic \citep[Ex. 5]{pethick2022escaping}}]
\label{ex:quadratic}
Consider,
\begin{equation}
\min_{x \in \mathbb R} \max_{y  \in \mathbb R} \phi(x,y) := axy + \frac{b}{2}x^2 - \frac{b}{2}y^2,
\end{equation}
where $a \in \mathbb R_+$ and $b \in \mathbb R$.
\end{example}
The problem constants in \Cref{ex:quadratic} can easily be computed as $\rho = \frac{b}{a^2+b^2}$ and $L = \sqrt{a^2+b^2}$.
We can rewrite \Cref{ex:quadratic} in terms of $L$ and $\rho$ by choosing
$a=\sqrt{L^2-L^4 \rho ^2}$ and $b = L^2 \rho$.

We provide below a slight generalization of the Forsaken example \citep[Example 5.2]{hsieh2021limits}, from which we derive another important case.

\begin{example}
\label{ex:forsaken}
Consider,
\begin{equation}
\min_{|x|\leq\nicefrac{3}{2}} \max_{|y|\leq\nicefrac{3}{2}} \phi(x,y):=x(y-a)+\psi(x)-\psi(y),
\end{equation}
where $\psi(z) = \frac{1}{4} z^{2}-\frac{1}{2} z^{4}+\frac{1}{6} z^{6}$ and $a\in \R$. 
We have the following important cases:
\begin{thmnum}
  \item for $a=0.45$ we recover Forsaken {\citep[Example 5.2]{hsieh2021limits}}.
  \item for $a=0.34$ we ensure that the first-order stationary point is a local Nash equilibrium (LNE), which is apparent from inspection of the Jacobian.
  We call this new example LNEForsaken.
\end{thmnum}
\end{example}

In both \Cref{ex:quadratic} and \Cref{ex:forsaken} the operator $F$ is defined as $Fz = (\nabla_x \phi(x,y), -\nabla_y \phi(x,y))$.
For \Cref{ex:forsaken} the Lookahead methods use $\tau=20$, $\lambda=0.2$ and $\gamma = \nicefrac{1}{L}\approx 0.08$ and (R)APP uses $\tau=10$, $\lambda=0.2$ and $\gamma = \nicefrac{4}{L}\approx 0.32$.
In \Cref{ex:polargame,ex:quadratic} we use $\gamma = \nicefrac{1}{L}$, $\lambda=0.1$ for \ref{eq:lookahead} and \ref{eq:EG+} with $\alpha=0.1$ for the latter. 
In the constrained examples $L$ refers to the Lipschitz constant constrained to the constraint set.

\begin{figure}[h]
\centering
\includegraphics[width=0.5\textwidth]{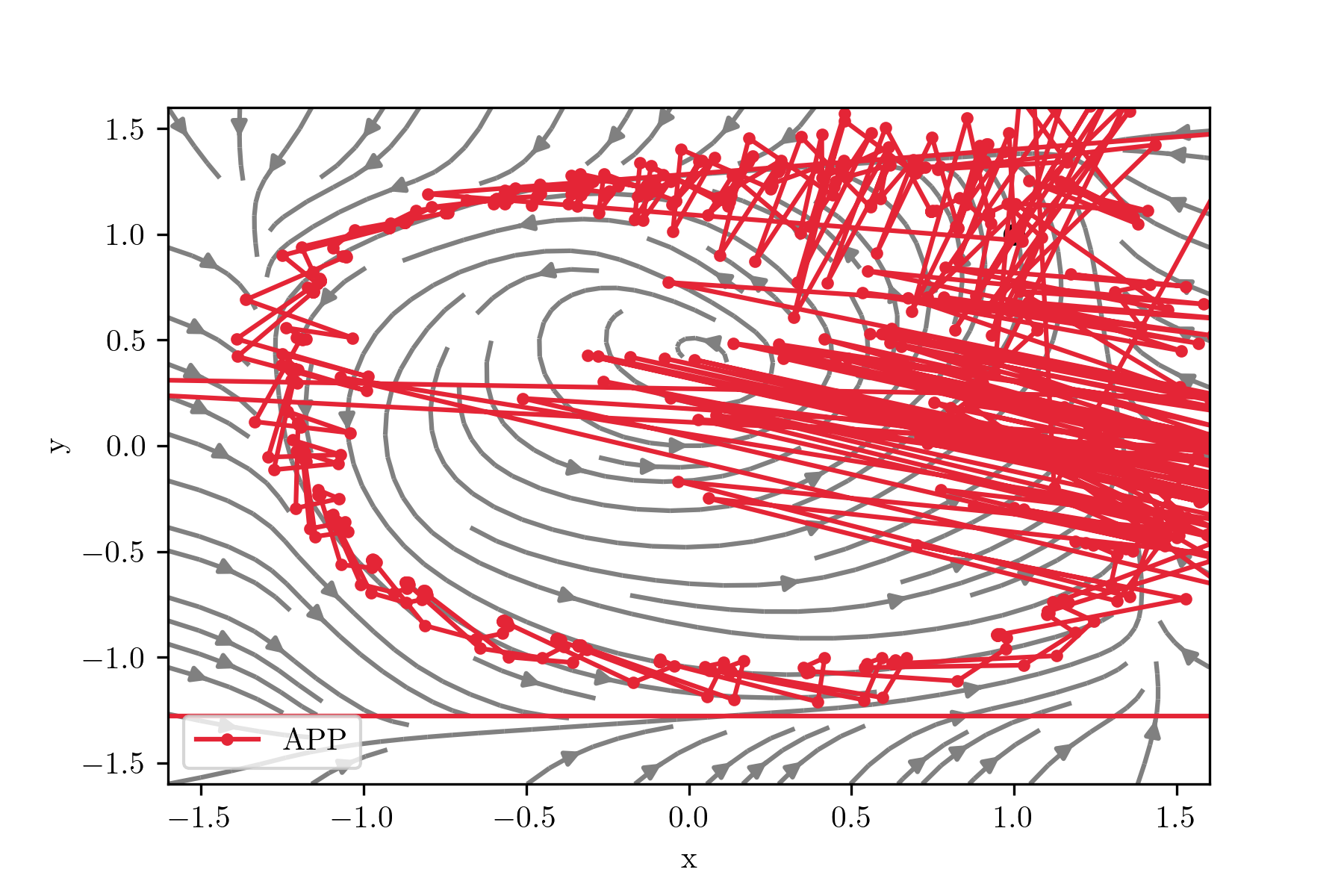}%
\caption{The iterates of APP associated with \Cref{fig:forsaken}.}
\label{fig:APPM:iterates}
\end{figure}

\begin{figure}[h]
\centering
\includegraphics[width=0.5\textwidth]{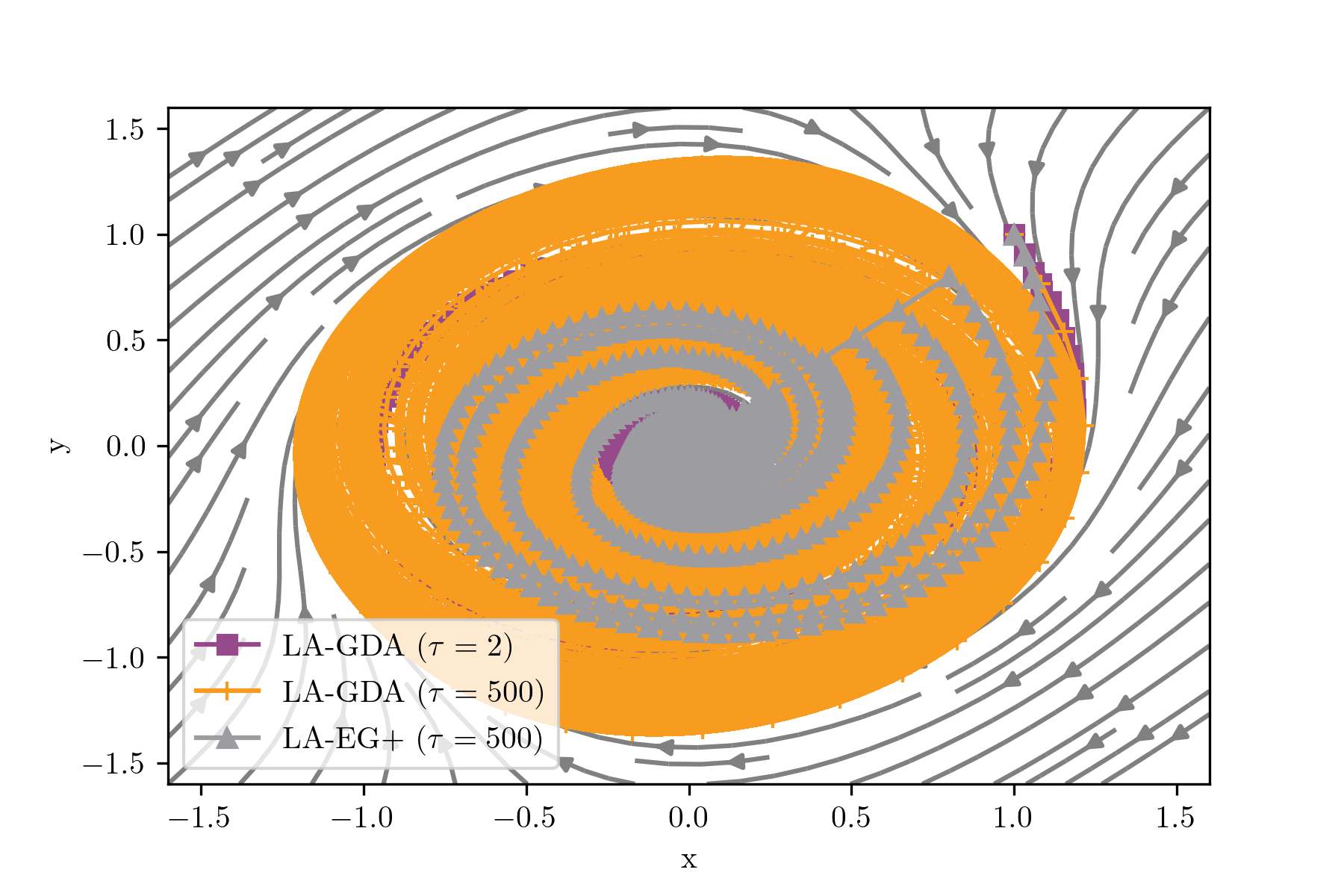}%
\includegraphics[width=0.5\textwidth]{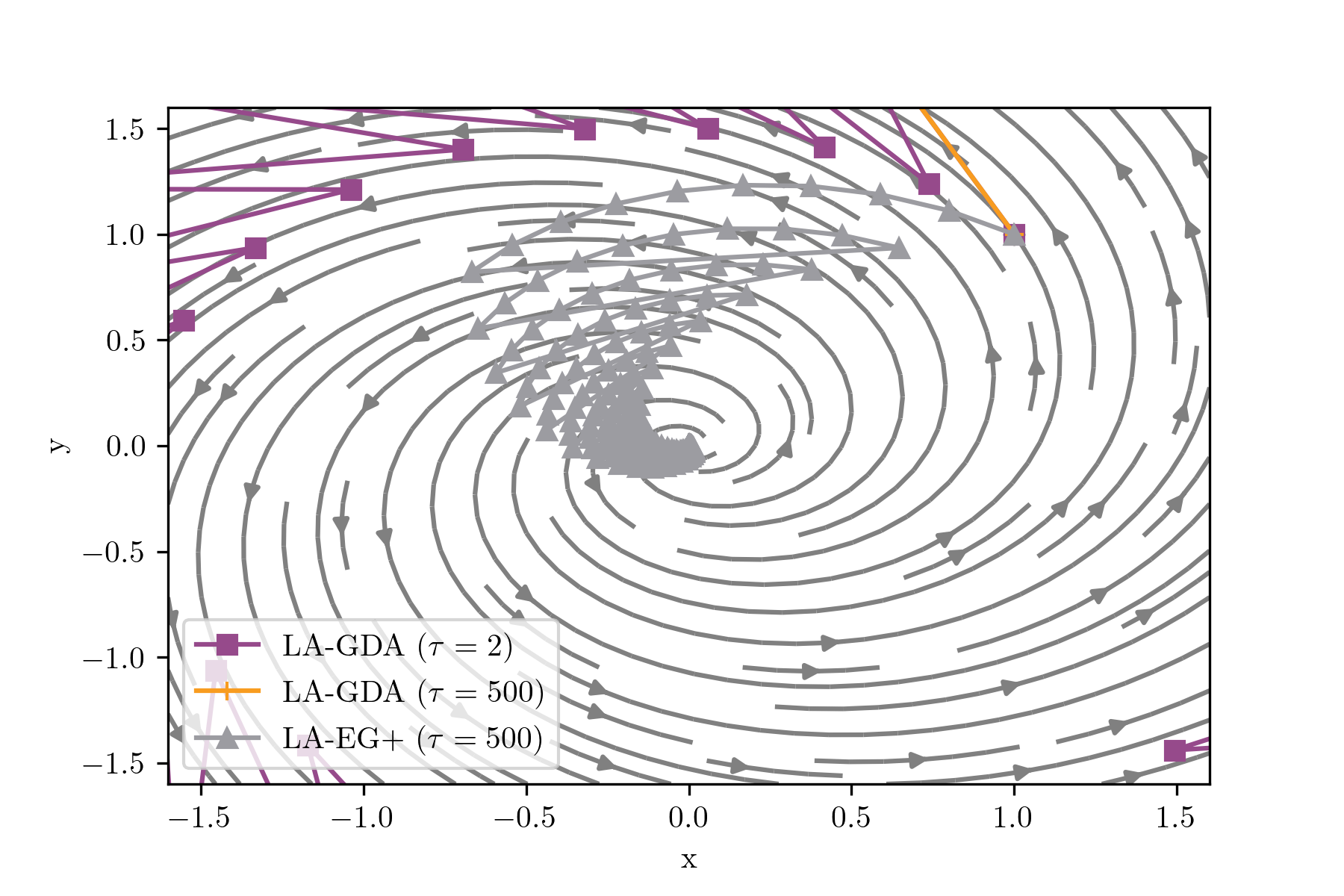}%
\caption{Iterates associated with \Cref{fig:LA:F}.}
\label{fig:LA:iterates}
\end{figure}

\subsection{Image generation}\label{app:imggen}

\paragraph{Architecture} The ResNet uses a 128-dimensional input space for the generator and spectral normalization for the discriminator (see \citet[Table 7]{chavdarova2020taming}).
The models' parameters are initialized using the Xavier initialization as suggested in \citet{miyato2018spectral}.

\paragraph{Optimizers}
All methods relies on stochastic gradients computed over a mini-batch. 
The discriminator and generator is updated in an alternating fashion.
We use the same variant of extragradient as \citet{chavdarova2020taming} uses in their implementation.
The variant only uses the extrapolated point of the \emph{opponent} in the update of the next iterate $(x^{k+1}, y^{k+1})$ as follows
\begin{equation}
\begin{split}
\bar x^{k} &= x^k - \gamma_1 \nabla \phi(x^k, y^k) \\
\bar y^{k} &= y^k + \gamma_2 \nabla \phi(x^k, y^k) \\
x^{k+1} &= x^k - \gamma_1 \nabla \phi(x^k, \bar y^k) \\
y^{k+1} &= y^k + \gamma_2 \nabla \phi(\bar x^k, y^k)
\end{split}
\end{equation}
Interestingly, we observed that the classical extragradient method (both a simultaneous and alternating variant) did not perform well under the hinge loss as used in the experiments.
We leave investigate of this for future work.

\paragraph{Evaluation}
We use the Fréchet inception distance (FID) \citep{heusel2017gans} evaluated on \num{50000} examples and the Inception score (ISC) \citep{salimans2016improved}.
For consistent and reproducible evaluation we use the \texttt{torch-fidelity} Python library \citep{obukhov2020torchfidelity} to compute the scores.
The mean and standard deviation is computed over 5 and 3 independent execution in \Cref{tab:Adam-based} and \Cref{tab:GDA-based}, respectively.

\paragraph{Compute time} Producing \Cref{tab:Adam-based} alone takes roughly $6 \text{ methods} \times 5 \text{ runs} \times 30\text{ hours} = 37.5 \text{ days}$ on a NVIDIA A100 GPU.

\begin{figure}[h]
\centering
\includegraphics[width=0.5\textwidth]{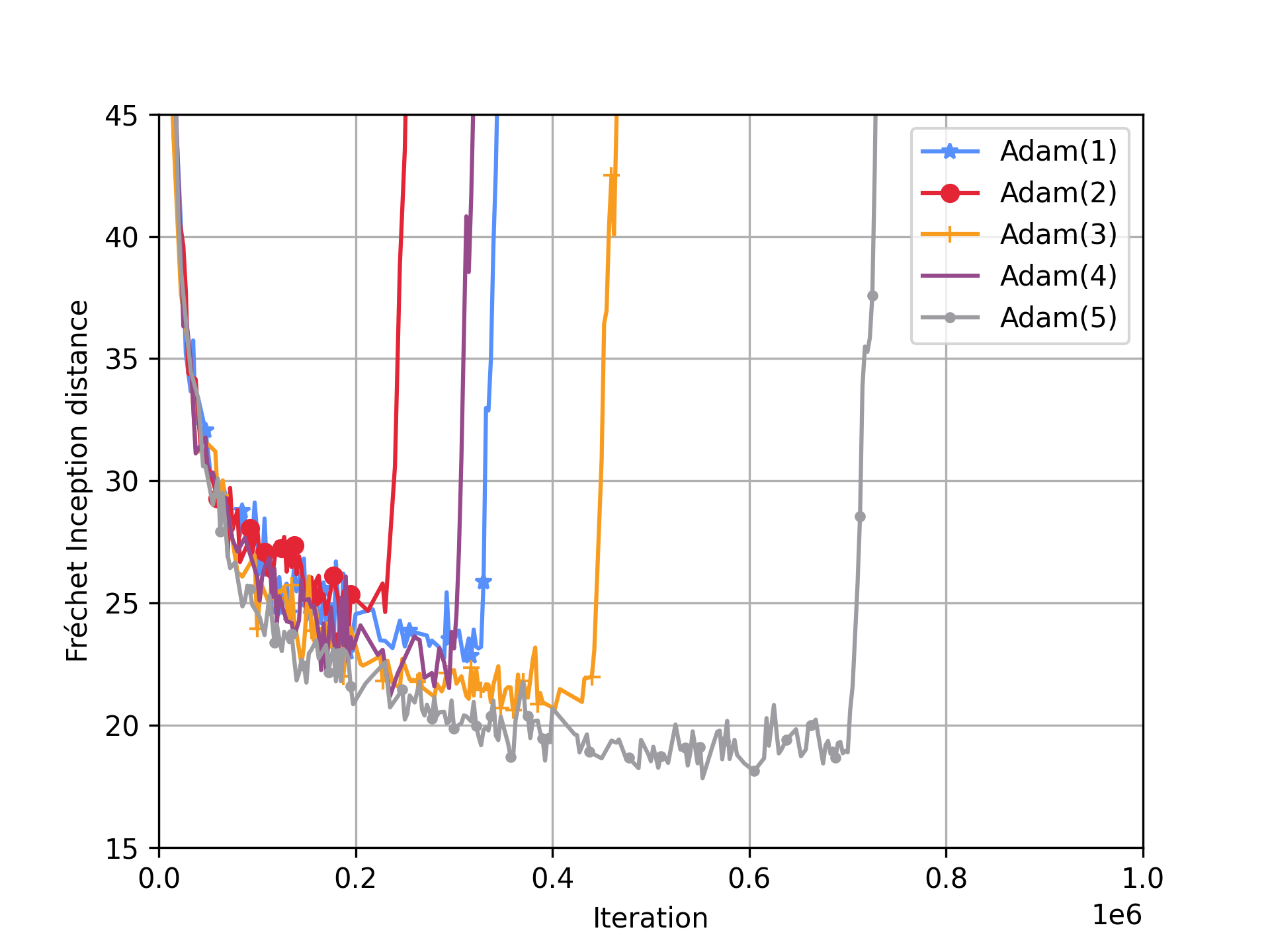}%
\includegraphics[width=0.5\textwidth]{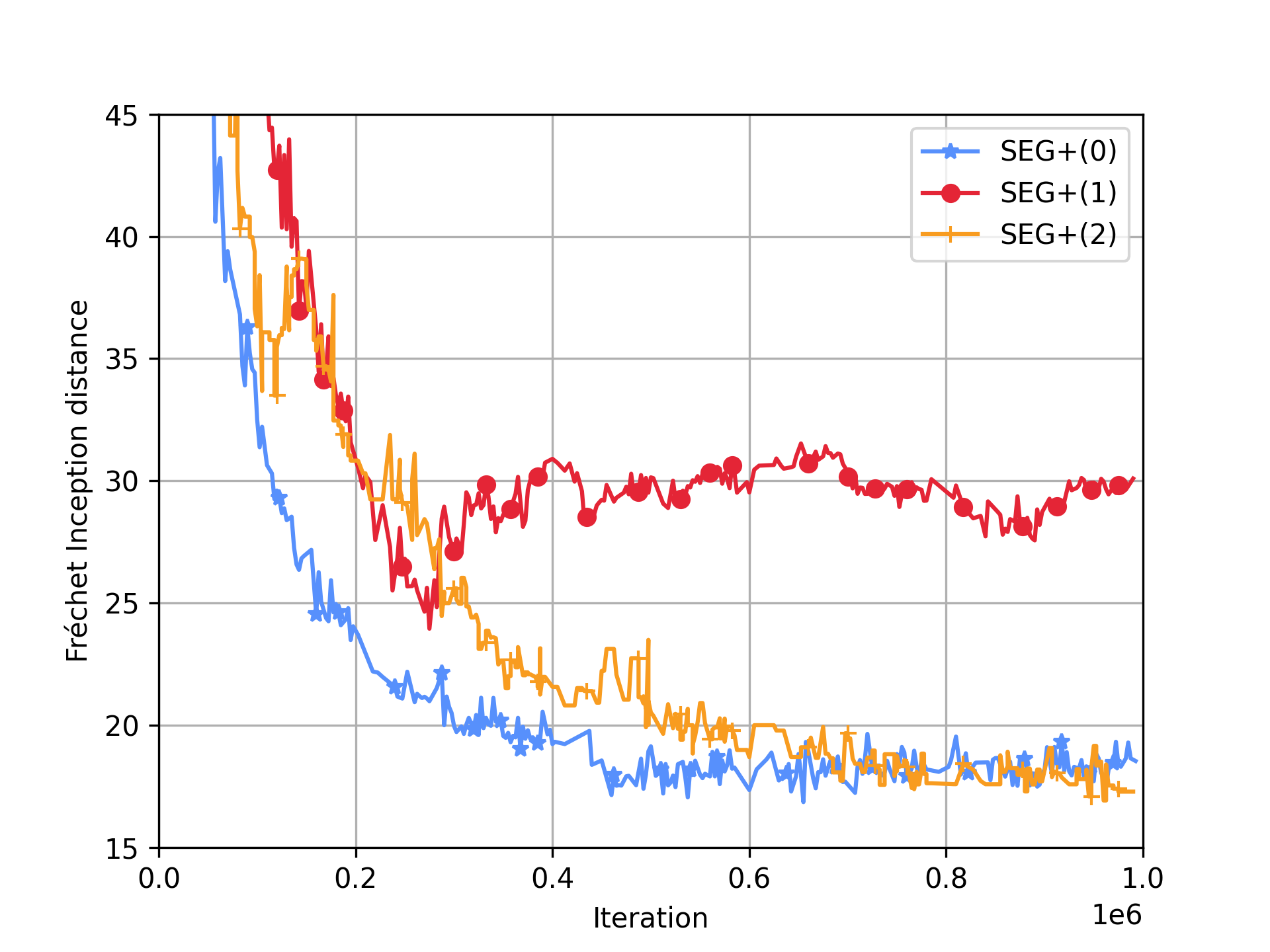}%
\caption{(left) Adam eventually diverges for all 5 runs. See \Cref{fig:adam} for comparison with Lookahead. (right) In contrast, EG+ increases stability (and thus avoids divergence), but in effect might also be stuck in a local (suboptimal) solution.
This explains the high variance and poor performance of EG+.  
By excluding the locally stuck run, EG+ achieves a FID of $16.88\pm0.05$ and a ISC of $8.0\pm0.02$, which is competitive even with the Lookahead-based methods.
}
\label{fig:Adam|EG+:all}
\end{figure}

\label{app:experiments}
\end{toappendix}
    \section{Experiments}\label{sec:experiments}
    This section demonstrates that linear interpolation can lead to an improvement over common baselines.

\paragraph{Synthetic examples}

\Cref{fig:forsaken,fig:LA:F} demonstrate \RAPP, \ref{eq:lookahead} and \ref{eq:LA-CEG+} on a host of nonmonotone problems (\citet[Ex. 5.2]{hsieh2021limits},
\citet[Ex. 3(iii)]{pethick2022escaping},
\citet[Ex. 5]{pethick2022escaping}).
See \Cref{app:experiments} for definitions and further details.

\paragraph{Image generation}
We replicate the experimental setup of \citet{chavdarova2020taming,miyato2018spectral}, which uses
hinge version of the non-saturated loss and a ResNet with spectral normalization for the discriminator (see \Cref{app:imggen} for details).
To evaluate the performance we rely on the commonly used Inception score (ISC) \citep{salimans2016improved} and Fréchet inception distance (FID) \citep{heusel2017gans} and report the best iterate.
We demonstrate the methods on the CIFAR10 dataset \citep{krizhevsky2009learning}.
The aim is \emph{not} to beat the state-of-the-art, but rather to complement the already exhaustive numerical evidence provided in \citet{chavdarova2020taming}.

For a fair \emph{computational} comparison we count the number of \emph{gradient computations} instead of iterations $k$ as in \citet{chavdarova2020taming}.
Maybe surprisingly, we find that the extrapolation methods such as EG and \RAPP still outperform the baseline, despite having fewer effective iterations.
\RAPP improves over EG, which suggest that it can be worthwhile to spend more computation on refining the updates at the cost of making fewer updates.

\begin{wrapfigure}{r}{6cm}
\centering
\vspace{-2.2em}
\includegraphics[width=0.4\textwidth]{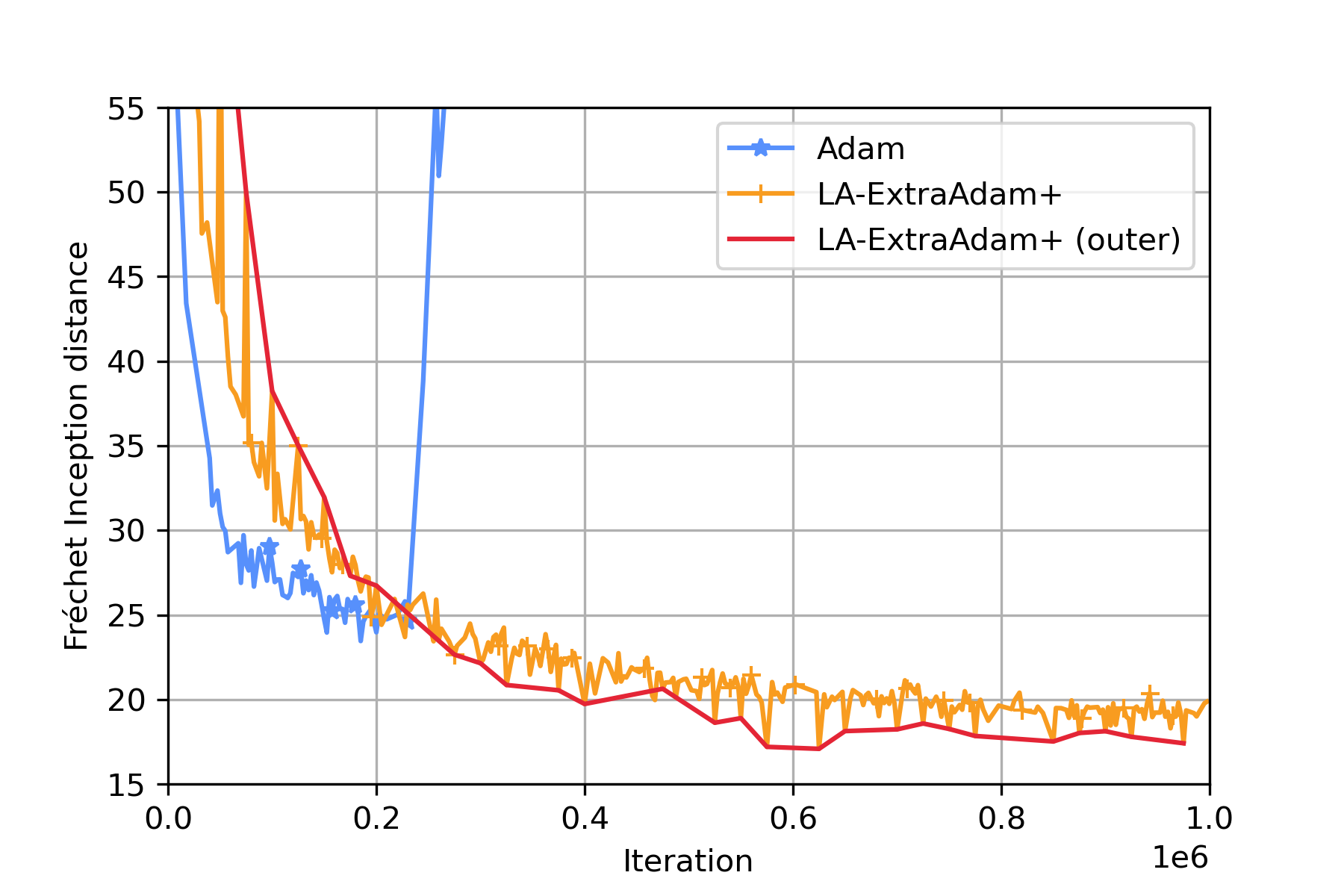}%
\caption{Adam eventually diverges on CIFAR10 while Lookahead is stable with the outer iterate enjoying superior performance.
\vspace{-2em}
}
\label{fig:adam}
\end{wrapfigure}

The first experiment we conduct matches the setting of \citet{chavdarova2020taming} by relying on the Adam optimizer and using and update ratio of $5{}:{}1$ between the discriminator and generator.
We find in \Cref{tab:Adam-based} that LA-ExtraAdam+ has the highest ISC (8.08) while LA-ExtraAdam has the lowest FID (15.88).
In contrast, we confirm that Adam is unstable while Lookahead prevents divergence as apparent from \Cref{fig:adam}, which is in agreement with \citet{chavdarova2020taming}.
In addition, the \emph{outer} loop of Lookahead achieves better empirical performance, which corroborate the theoretical result (cf. \Cref{rem:LA:outer}).
Notice that ExtraAdam+ has slow convergence (without Lookahead), which is possibly due to the $\nicefrac{1}{2}$-smaller stepsize.

We additionally simplify the setup by using GDA-based optimizers with an update ratio of $1{}:{}1$, which avoids the complexity of diagonal adaptation, gradient history and multiple steps of the discriminator as in the Adam-based experiments.
The results are found in \Cref{tab:GDA-based}.
The learning rates are tuned for GDA and we use those parameters fixed across all other methods.
Despite being tuned on GDA, we find that extragradient methods, Lookahead-based methods and \RAPP all \emph{still} outperform GDA in terms of FID.
The biggest improvement comes from the linear interpolation based methods Lookahead and \RAPP (see \Cref{fig:Adam|EG+:all} for further discussion on EG+).
Interesting, the Lookahead-based methods are roughly comparable with their Adam variants (\Cref{tab:Adam-based}) while GDA even performs better than Adam.
\looseness=-1

\begin{table}
\vspace{-2em}
\parbox[t]{.45\linewidth}{
    \centering
    \caption{Adam-based.
    The combination of Lookahead and extragradient-like methods performs the best.
    }
    \begin{tabular}{c|c|c}
\toprule
     & \textbf{FID} & \textbf{ISC}\\
\midrule                                                                 
Adam     & 21.04$\pm$2.20    & 7.61$\pm$0.15    \\
ExtraAdam    & 18.23$\pm$1.13   &  7.79$\pm$0.08    \\
ExtraAdam+    & 22.94$\pm$1.93   &  7.65$\pm$0.13   \\
\hline
LA-Adam   &  17.63$\pm$0.65    & 7.86$\pm$0.07    	\\
LA-ExtraAdam   & \bfseries{15.88$\pm$0.67}      & 7.97$\pm$0.12    \\
LA-ExtraAdam+   & 17.86$\pm$1.03     & \bfseries{8.08$\pm$0.15}    \\
\bottomrule
    \end{tabular}
    \label{tab:Adam-based}
}
\hfill
\parbox[t]{.45\linewidth}{
    \centering
    \caption{GDA-based.
    Both \RAPP and Lookahead increases the scores substantially.}
    \begin{tabular}{c|c|c}
\toprule
     & \textbf{FID} & \textbf{ISC}\\
\midrule                                                                 
GDA     & 19.36$\pm$0.08     & 7.84$\pm$0.07    \\
EG    & 18.94$\pm$0.60  &  7.84$\pm$0.02    \\
EG+    & 19.35$\pm$4.28   &  7.74$\pm$0.44   \\
\hline
LA-GDA   & \bfseries{16.87$\pm$0.18}    & \bfseries{8.01$\pm$0.08}   	\\
LA-EG   &  16.91$\pm$0.66      & 7.97$\pm$0.12     \\
LA-EG+   & 17.20$\pm$0.44      & 7.94$\pm$0.11    \\
RAPP   & 17.76$\pm$0.82     & 7.98$\pm$0.08    \\
\bottomrule
    \end{tabular}
    \label{tab:GDA-based}
}
\vspace{-1em}
\end{table}

\section{Conclusion \& limitations}\label{sec:conclusion}

    We have precisely characterized the stabilizing effect of linear interpolation by analyzing it under cohypomonotonicity.
    We proved last iterate convergence rates for our proposed method \RAPP.
    The algorithm is double-looped, which introduces a log factor in the rate as mentioned in \Cref{rem:averagedAPPM:last}. 
    It thus remains open whether last iterate is possible using only $\tau=2$ inner iterations (for which \RAPP reduces to \ref{eq:EG+} in the unconstrained case).
    By replacing the inner solver we subsequently rediscovered and analyzed Lookahead using nonexpansive operators. 
    In that regard, we have only dealt with compositions of operators.
    It would be interesting to further extend the idea to understanding and developing both Federated Averaging and the meta-learning algorithm Reptile (of which Lookahead can be seen as a single client and single task instance respectively), which we leave for future work.

\section{Acknowledgements}\label{sec:acknowledgement}
Thanks to Ahmet Alacaoglu, Weiqiang Zheng, Leello Dadi and Stratis Skoulakis for helpful discussion.
This work was supported by Google. %
This work was supported by Hasler Foundation Program: Hasler Responsible AI (project number 21043). %
This work was supported by the Swiss National Science Foundation (SNSF) under grant number 200021\_205011. %
This project has received funding from the European Research Council (ERC) under the European Union's Horizon 2020 research and innovation programme (grant agreement n° 725594 - time-data). %

\begin{toappendix}
\newpage
\subsubsection{Hyperparameters}

\begin{table}[h]
    \centering
    \caption{Training Hyperparameters for Adam-based experiments on CIFAR10}
\begin{adjustbox}{center}
    \begin{tabular}{c|c|c|c|c|c|c}
\toprule
\textbf{Hyperparameter} & \textbf{Adam} & \textbf{LA-Adam}  & \textbf{ExtraAdam+} & \textbf{LA-ExtraAdam+} & \textbf{ExtraAdam} & \textbf{LA-ExtraAdam}\\
\midrule                                                                 
lrD     & 2e-4    & 2e-4   & 2e-4   & 2e-4   & 2e-4   & 2e-4  \\
lrG   & 2e-4    & 2e-4   & 2e-4   & 2e-4   & 2e-4   & 2e-4  	\\
Batch Size    & 128   &  128   &  128  &  128   & 128    & 128	\\
$\beta_1$   & 0.0      & 0.0   & 0.0   & 0.0   & 0.0     & 0.0    \\
D-steps  & 5      & 5  & 5  & 5  & 5   & 5  			\\
Lookahead $\tau$  &     & 5   &    &  5000     &    &  5000  \\
Lookahead $\lambda$  &     & 0.5   &    &  0.5     &    &  0.5  \\
EG+ $\alpha$  &   &    & 0.5    & 0.5    &        &  \\
\bottomrule
    \end{tabular}
    \end{adjustbox}
    \label{tab:hp-Adam}
\end{table}

\begin{table}[h]
    \centering
    \caption{Training Hyperparameters for GDA-based experiments on CIFAR10}
    \begin{tabular}{c|c|c|c|c|c|c|c}
\toprule
\textbf{Hyperparameter} & \textbf{GDA} & \textbf{LA-GDA} &\textbf{EG+} & \textbf{LA-EG+} &\textbf{EG} & \textbf{LA-EG} & \textbf{RAPP}\\
\midrule                                                                 
lrD     & 0.1    &  0.1    &  0.1   & 0.1  &  0.1   & 0.1 & 0.1   \\
lrG   & 0.02   & 0.02  & 0.02  & 0.02   & 0.02  & 0.02   & 0.02 	\\
Batch Size    & 128   &  128   &  128  &  128 &  128  &  128  &  128	\\
D-steps  & 1      & 1  & 1  & 1 	 & 1  & 1 	& 1 		\\
Lookahead $\tau$  &     & 5000   &     &  5000  &     &  5000   & \\
Lookahead $\lambda$    &   &  0.5     &    &  0.5  &    &  0.5   & \\
EG+ $\alpha$  &   &    & 0.5    & 0.5    &        &  \\
RAPP $\tau$  &     &    &     &    &     &    &  3 \\
RAPP $\lambda$    &   &       &    &   &    &     &  0.9\\
\bottomrule
    \end{tabular}
    \label{tab:hp-SGD}
\end{table}
\label{app:hyperparameters}
\end{toappendix}

\bibliographystyle{icml2023}
\bibliography{ref.bib}

\newpage
\appendix
\onecolumn

\begin{center}
\vspace{7pt}
{\Large \fontseries{bx}\selectfont Appendix}
\end{center}

\renewcommand{\contentsname}{Table of Contents}
\etocdepthtag.toc{mtappendix}
\etocsettagdepth{mtchapter}{none}
\etocsettagdepth{mtappendix}{subsection}
\tableofcontents

\newpage

\end{document}